\documentclass{article}

\usepackage{microtype}
\usepackage{graphicx}
\usepackage{subcaption} 
\usepackage{booktabs} 
\usepackage{tabularx}
\usepackage{xcolor} 
\usepackage{soul}
\usepackage{amsmath}
\usepackage[most]{tcolorbox}
\usepackage{enumitem}
\usepackage{amssymb}
\usepackage{mathtools}
\usepackage{amsthm}
\usepackage{dblfloatfix}

\usepackage[table]{xcolor}
\definecolor{highlightblue}{RGB}{235, 240, 255}
\definecolor{highlightred}{RGB}{255, 240, 240}
\definecolor{darkgreen}{RGB}{0, 100, 0}
\definecolor{darkred}{RGB}{139, 0, 0}

\usepackage{threeparttable}
\usepackage{multirow} 
\definecolor{stanfordred}{RGB}{140,21,21} 
\usepackage{hyperref}

\usepackage[preprint]{icml2026}

\usepackage[capitalize,noabbrev]{cleveref}

\theoremstyle{plain}
\newtheorem{theorem}{Theorem}[section]

\newtheorem{lemma}[theorem]{Lemma}
\newtheorem{corollary}[theorem]{Corollary}
\theoremstyle{definition}

\theoremstyle{remark}
\newtheorem{remark}[theorem]{Remark}
\usepackage[dvipsnames]{xcolor}

\usepackage[textsize=tiny]{todonotes}

\icmltitlerunning{Continuous-Utility Direct Preference Optimization}

\begin{document}

\twocolumn[
  \icmltitle{Continuous-Utility Direct Preference Optimization}



  \icmlsetsymbol{equal}{*}

\begin{icmlauthorlist}
  \icmlauthor{Muhammad Ahmed Mohsin}{equal,stanford}
  \icmlauthor{Muhammad Umer}{equal,stanford}
  \icmlauthor{Ahsan Bilal}{ou}
  \icmlauthor{Zihao He}{meta}
  \icmlauthor{M. Usman Rafique}{zoox}
  \icmlauthor{Asad Aali}{stanford}
  \icmlauthor{Muhammad Ali Jamshed}{glasgow}
  \icmlauthor{John M. Cioffi}{stanford}
  \icmlauthor{Emily Fox}{stanford}
\end{icmlauthorlist}

\icmlaffiliation{stanford}{Stanford University, Stanford, California, USA}
\icmlaffiliation{ou}{University of Oklahoma, Norman, Oklahoma, USA}
\icmlaffiliation{meta}{Meta, USA}
\icmlaffiliation{zoox}{Zoox, USA}
\icmlaffiliation{glasgow}{University of Glasgow, Glasgow, UK}

\icmlcorrespondingauthor{Muhammad Ahmed Mohsin}{muahmed@stanford.edu}

  \icmlkeywords{direct preference optimization, machine learning, icml}

  \vskip 0.3in
]



\printAffiliationsAndNotice{}  

\begin{abstract}
Large language model reasoning is often treated as a monolithic capability, relying on binary preference supervision that fails to capture partial progress or fine-grained reasoning quality. We introduce continuous utility direct preference optimization \textbf{(\textcolor{MidnightBlue}{CU-}\textcolor{NavyBlue}{DPO})}, a framework that aligns models to a portfolio of prompt-based cognitive strategies by replacing binary labels with continuous scores that capture fine-grained reasoning quality. We prove that learning with \(K\) strategies yields a $\Theta(K \log K)$ improvement in sample complexity over binary preferences, and that DPO converges to the entropy-regularized utility-maximizing policy. To exploit this signal, we propose a two-stage training pipeline: (i) strategy selection, which optimizes the model to choose the best strategy for a given problem via best-vs-all comparisons, and (ii) execution refinement, which trains the model to correctly execute the selected strategy using margin-stratified pairs. On mathematical reasoning benchmarks, \textbf{\textcolor{MidnightBlue}{CU-}\textcolor{NavyBlue}{DPO}} improves strategy selection accuracy from 35--46\% to 68--78\% across seven base models, yielding consistent downstream reasoning gains of up to +6.6 points on in-distribution datasets with effective transfer to out-of-distribution tasks.
\end{abstract}

\section{Introduction}
Large language models (LLMs) exhibit strong mathematical reasoning capabilities, yet they often commit to a single ``thinking style" and collapse when problems demand different cognitive procedures. Current LLM approaches treat mathematical reasoning as monolithic, applying uniform strategies regardless of problem structure. This contrasts with human mathematical cognition, where experts fluidly select among diverse reasoning modalities (algebraic manipulation, geometric visualization, etc.) based on problem characteristics~\citep{schoenfeld1985mathematical, polya1945how}.

\begin{figure*}[t!]
    \centering
    \includegraphics[width=\textwidth]{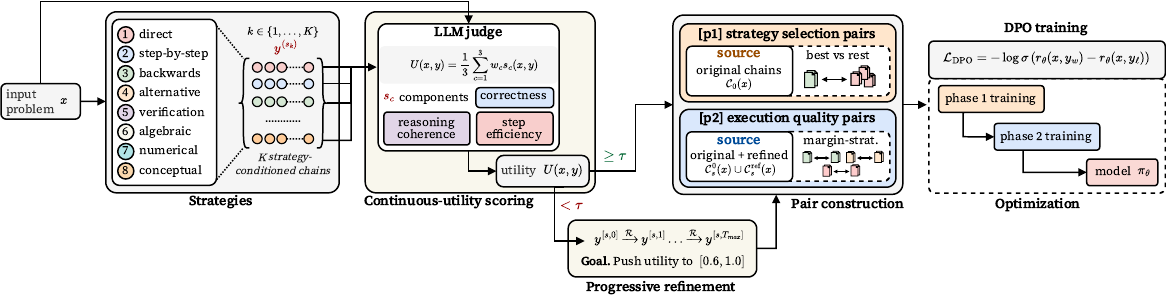}
    \caption{\textbf{\textbf{\textcolor{MidnightBlue}{CU-}\textcolor{NavyBlue}{DPO}} overview.} Strategy-conditioned sampling $\rightarrow$ LLM-judged continuous utilities $\rightarrow$ progressive refinement $\rightarrow$ high-signal pair construction (Phase~1 [p1]: strategy selection, Phase~2 [p2]: execution refinement) $\rightarrow$ utility-weighted DPO training.}
    \label{fig:flowchart}
\end{figure*}

The dominant paradigm for aligning language models through direct preference optimization (DPO)~\citep{rafailov2023direct} relies on binary labels that fundamentally misrepresent the continuous nature of reasoning quality. A chain may contain correct sub-derivations but fail at a single arithmetic step; binary labels collapse all such distinctions into a single bit. 

Instead, we introduce~\textbf{\textcolor{MidnightBlue}{CU-}\textcolor{NavyBlue}{DPO}} which replaces binary supervision with \emph{continuous, high-fidelity utilities} that aggregate LLM-judged scores across answer correctness, reasoning quality, and completeness. When preference probabilities follow the Bradley-Terry model $P(y_w \succ y_l | x) = \sigma(U(x,y_w) - U(x,y_l))$, the DPO optimal policy satisfies $r_\theta(x, y_i) - r_\theta(x, y_j) = U(x,y_i) - U(x,y_j)$, with the learned reward recovering continuous utility structure up to a problem-dependent constant (Theorem~\ref{thm:dpo-convergence}). We prove that learning from binary preferences via passive uniform sampling (Section~\ref{subsec:sample_efficiency}) requires $\Omega(NK^2 \log K)$ samples for N problems to recover the latent quality-based ranking of $K$ candidate solutions per problem. Our continuous utilities achieve the same accuracy with only $O(NK)$ samples, a $\Theta(K \log K)$ multiplicative improvement.

A potential challenge of learning from pairwise comparisons is conflicting supervision signals about optimal strategies (Section~\ref{subsec:two_phase_dpo}). For example, comparing a correctly executed ``bad strategy'' to a poorly executed ``good strategy'' confuses the model about whether to optimize strategy choice or execution quality. To address this, we carefully construct two distinct sets of preference pairs used in sequential training phases: Phase~1 for strategy selection through best-vs-all comparisons, then Phase~2 for execution refinement through margin-stratified intra-strategy pairs.

Our empirical analysis across 450 problems from DeepMath, HARDMath2, and ProofNet reveals that strategy-conditioned generation produces utility scores ranging from 0.1 to 0.9 \emph{for the same problem} depending on strategy choice (Appendix~\ref{sec:ablation_dataset_characteristics}), with utility gaps between best and worst strategies averaging 0.35. This demonstrates that strategy selection alone accounts for substantial solution quality variance, motivating us to frame reasoning as selecting from a \emph{portfolio} of problem-solving approaches. While we focus on mathematical reasoning, our framework of continuous utility-guided preference optimization over strategy portfolios extends naturally to other domains requiring diverse problem-solving approaches, such as code generation, scientific reasoning, and multi-step planning tasks.

\textbf{Contributions.} We present \textbf{\textbf{\textcolor{MidnightBlue}{CU-}\textcolor{NavyBlue}{DPO}}} (continuous utility direct preference optimization), a unified framework for mathematical reasoning with the following contributions:

\begin{enumerate}[leftmargin=*,itemsep=0pt]
\item We train models to select optimal reasoning strategies by generating a set of $K$ diverse reasoning chains across distinct cognitive mechanisms. Rather than forcing uniform reasoning, \textbf{\textcolor{MidnightBlue}{CU-}\textcolor{NavyBlue}{DPO}} learns to match strategies to problem structure, aligning with how human experts adaptively choose solution approaches (Appendix ~\ref{app:strategy_prompts}).
    \item We replace binary preferences with continuous utility scores, proving a $\Theta(K \log K)$ sample complexity improvement over standard DPO. For $K=8$ strategies, this yields approximately $24\times$ theoretical speedup (2.16x in experimentation Section~\ref{sec:Experiments}). We formally establish that DPO with utility-derived preferences converges to the entropy-regularized optimal policy, with learned implicit reward satisfying $r_\theta(x, y) = U(x,y) + c(x)$ (Theorems~\ref{thm:sample-efficiency} and~\ref{thm:dpo-convergence}).
    \item We propose a novel two-stage training framework that explicitly separates strategy selection (Phase~1) from execution refinement (Phase~2), preventing the supervision conflicts that arise when optimizing both objectives simultaneously.
    \item To maximize supervision signal, we propose margin-stratified sampling in Phase~2, which prioritizes high-information pairs, leading to increased mean utility margins (0.244) compared to uniform sampling (0.15--0.20). Consequently, \textbf{\textcolor{MidnightBlue}{CU-}\textcolor{NavyBlue}{DPO}} reaches higher win rates with fewer training iterations, and preference probabilities follow the predicted Bradley-Terry model with $R^2 > 0.97$. We validate \textbf{\textcolor{MidnightBlue}{CU-}\textcolor{NavyBlue}{DPO}} on both in-distribution (DeepMath, HARDMath2, ProofNet) and out-of-distribution benchmarks (GSM8K, MATH-500, U-MATH).
\end{enumerate}



\section{Utility and Pairwise Preferences}
Key questions in \textbf{\textcolor{MidnightBlue}{CU-}\textcolor{NavyBlue}{DPO}} are: \textbf{(i)} how to define the utility function and \textbf{(ii)} how to create preference pairs for effective training. \textbf{\textcolor{MidnightBlue}{CU-}\textcolor{NavyBlue}{DPO}} proposes preference pairs that cleanly separate strategy selection signals from execution quality improvements and utility scores that account for correctness, preciseness, and coherence. As illustrated in Figure~\ref{fig:flowchart}, this process transforms input problems into high-signal preference pairs through strategy-conditioned sampling and continuous utility scoring via an LLM judge. Crucially, we employ selective refinement to generate high-quality targets for execution optimization, concluding with a stratified pair construction step that yields 3,150 strategy selection pairs (Phase~1) and 2,680 execution refinement pairs (Phase~2).

\subsection{Strategy-conditioned chain sampling}
\label{sec:chain_sampling}
For each problem instance $x \in \mathcal{X}$, we generate $K{=}8$ candidate reasoning chains from a fixed base model $\pi_0$ using a set of strategy prompts $\mathcal{S}=\{s_1,\dots,s_K\}$. Each strategy $s \in \mathcal{S}$ is implemented by a prompt template $g_s(\cdot)$ that induces a distinct problem-solving approach (e.g., step-by-step, backwards reasoning, verification, more details in Appendix~\ref{app:strategy_prompts}). We sample one chain per strategy as $y^{(s)} \sim \pi_0\!\left(\cdot \mid x, g_s(x)\right),$ yielding a problem-specific set of reasoning chains as $\mathcal{C}_0(x) = \{y^{(s)}\}_{s\in\mathcal{S}}$. Across $N{=}450$ problems, this produces $NK{=}3{,}600$ strategy-conditioned chains.

The motivation is that many mathematical failures arise from \emph{mechanism mismatch}: the model commits to an inappropriate problem-solving approach, even when the required approach is within the model's capabilities. Strategy conditioning makes this mismatch observable at the instance level by inducing a utility \emph{landscape} over problem-solving approaches for the same $x$. When evaluated by our utility function (a measure of reasoning quality detailed in Section~\ref{sec:utility}), the candidate chains in $\mathcal{C}_0(x)$ exhibit substantial within-problem dispersion, underscoring the significant gains achievable by employing the most suitable strategy. Specifically, the per-problem utility range has a mean of $0.295$ (min $0.050$, max $0.889$), and $87\%$ of problems have range $>0.3$. This high intra-instance variance provides the core signal for Phase~1: identifying which strategy best aligns with the structure of $x$.

\subsection{Continuous utility scoring via LLM Judge}\label{sec:utility}
Each chain $y$ is evaluated by an LLM judge that outputs a decomposed utility score as $U(x,y) = \frac{1}{3}\sum_{c=1}^3 w_c \cdot s_c(x,y) \in [0,1],$ where $s_c \in [0, 1]$ represents three scoring components: \textbf{(1)} correctness verification, \textbf{(2)} step efficiency, and \textbf{(3)} reasoning coherence. The LLM judge (Qwen 2.5 7B) produces both scoring components $s_c$ and weights $w_c$, with higher weight assigned to correctness and lower weights to coherence and validity. This weighting achieves high correlation ($r=0.85$, Appendix~\ref{sec:utility-quality-correlation}) between utility scores and ground-truth correctness. Continuous utilities preserve fine-grained quality distinctions. For example, a chain that follows correct reasoning steps but contains a minor arithmetic mistake is qualitatively different from a chain with a core logical error. Binary DPO assigns both the same \emph{reject} label. With our proposed utility, such a chain would likely have high reasoning coherence and validity scores ($s_2, s_3 \approx 0.8$) but low correctness ($s_1 \approx 0.3$), yielding moderate overall utility ($U \approx 0.5$), whereas a fundamentally flawed chain would score low across all components ($U \approx 0.2$). To mitigate length bias, our LLM judge explicitly scores reasoning coherence and step efficiency, penalizing verbosity, and we validate that utility correlates with correctness ($r=0.85$) rather than chain length ($r=0.23$). Further details in Appendix~\ref{app:llm_judge}. 

\subsection{Progressive refinement for signal amplification}\label{sec:refine}
Low-utility chains often contain recoverable substructure, such as correct initial steps or valid algebraic manipulations that fail due to localized errors. We selectively refine chains below threshold $\tau = 0.4$ using a refinement operator $\mathcal{R}$ as $y' = \mathcal{R}(x, y)$ applied if $U(x,y) < \tau$. The threshold $\tau = 0.4$ targets chains in the bottom 40th percentile of the utility distribution (Appendix~\ref{sec:ablation_dataset_characteristics}), which empirically exhibit fixable errors rather than fundamental reasoning failures. Refinement proceeds iteratively for up to $T_{\max}=5$ rounds with empirically monotone utility improvement  $y^{[s,0]} \xrightarrow{\mathcal{R}} y^{[s,1]} \xrightarrow{\mathcal{R}} \cdots \xrightarrow{\mathcal{R}} y^{[s,T]},
\quad U(x, y^{[s,t+1]}) \geq U(x, y^{[s,t]}),$ where $y^{[s,t]} \in \mathcal{C}^{\text{ref}}_s(x)$ denotes the $t$-th refinement iteration using strategy $s$, and $\mathcal{C}^{\text{ref}}_s(x)$ is the set of all refined chains using strategy $s$ for problem $x$. Refinement terminates when $U(x, y^{[s,T]}) \geq 0.6$ or $T = T_{\max}$. Empirically, 87.8\% of refinement attempts succeed within three rounds, with post-refinement utilities concentrated in $[0.6,1.0]$. Additional details are provided in Appendix~\ref{app:refinement_details}.

\subsection{High-signal preference pair construction}
\label{sec:pairs}
We construct preference pairs in two stages, each targeting a distinct learning objective. The first stage isolates coarse-grained strategy selection, while the second focuses on fine-grained execution quality within a fixed strategy. Separating these objectives prevents conflicting supervision signals that arise when cross-strategy pairs allocate gradients to ordering suboptimal strategies (formalized in Section~\ref{subsec:two_phase_dpo}).

\paragraph{Strategy selection pairs (Phase 1).}
To maximize signal for strategy selection, we construct preference pairs that directly compare the best strategy against all alternatives. For each problem $x$, we identify the optimal strategy $s^*(x) = \arg\max_{s \in \mathcal{S}} U(x,s)$ from the original chains $\mathcal{C}_0(x)$. We then form the set
\begin{multline}
\label{eq:pref_construction}
\mathcal{D}_{\text{Phase1}}(x)
= \Big\{(x, y^{(s^*(x))}, y^{(s)}) :
s \in \mathcal{S} \setminus \{s^*(x)\}, \\
U(x, y^{(s^*(x))}) > U(x, y^{(s)}) \Big\}.
\end{multline}
where $|\mathcal{D}_{\text{phase1}}(x)|$ = $K - 1$ ($K = 8$ strategies) yielding exactly one comparison per suboptimal strategy. This construction ensures that every training pair reinforces the optimal strategy, avoiding comparisons between suboptimal strategies, and providing large utility margins that clearly signal which reasoning mechanism best matches the problem.  In particular, original chains provide maximal utility diversity $\Delta U_{\max} = U(x,s^*) - \min_{s} U(x,s)$, yielding clear coarse-grained margins (mean $0.35$) that establish which reasoning mechanism suits each problem structure.

\paragraph{Execution quality pairs (Phase 2).}
After establishing strategy selection, we train on refined chains to improve execution quality. For each strategy $s \in \mathcal{S}$, let $\mathcal{C}_s(x) = \mathcal{C}^0_s(x) \cup \mathcal{C}^{\text{ref}}_s(x)$ denote all chains (original and refined) using strategy $s$ for problem $x$. We construct margin-stratified pairs comparing executions \emph{within the same strategy}: pairs $(y_i, y_j)$ where both $y_i, y_j \in \mathcal{C}_s(x)$ for some $s$. Sampling uniformly from all possible pairs yields an imbalanced distribution, failing to leverage subtle comparisons~\footnote{In practice, we primarily sample pairs where one chain is refined ($y_i \in \mathcal{C}^{\text{ref}}_s(x)$) and the other is original ($y_j \in \mathcal{C}^0_s(x)$) but both use the same strategy $s$, so the preference signal reflects execution improvements rather than a change in strategy.}. We use margin-stratified sampling in Phase~2 to prioritize weak-margin comparisons (margin $<0.15$), forcing the model to learn subtle, step-level reasoning improvements; refined chains naturally populate this regime due to their concentrated high-utility distribution (see Section~\ref{subsec:two_phase_dpo}).

\begin{figure*}[t]
    \centering
    \includegraphics[width=0.85\linewidth]{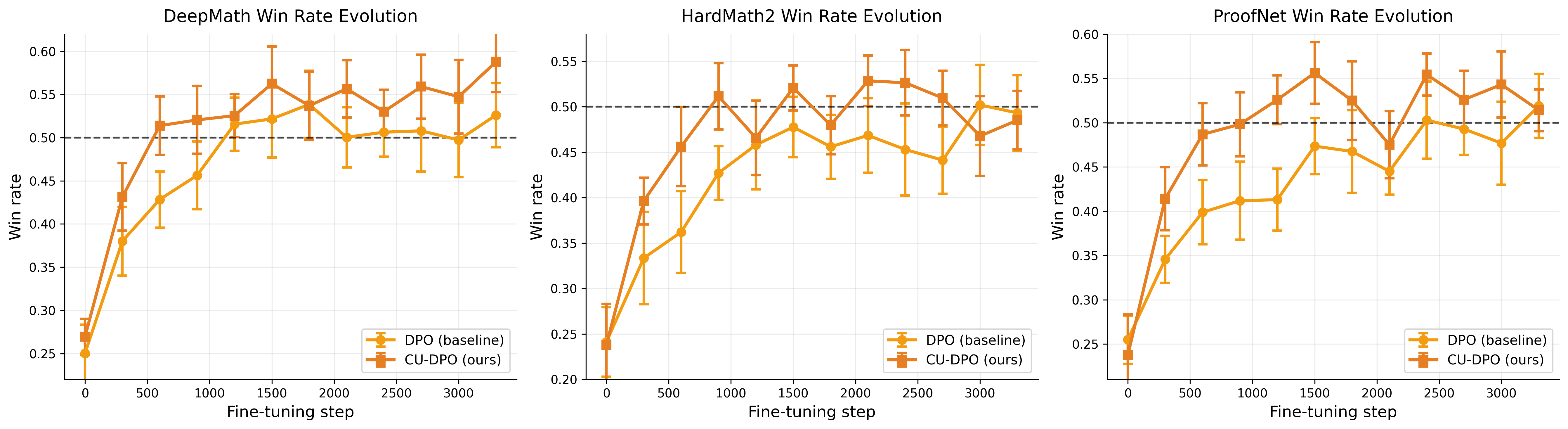}
    \caption{\textbf{Win-rate evolution per preference optimization step.} Win rate versus fine-tuning steps for DeepMath, HARDMath2, and ProofNet. \textbf{\textcolor{MidnightBlue}{CU-}\textcolor{NavyBlue}{DPO}} surpasses the baseline earlier and maintains a consistent advantage, demonstrating improved sample efficiency. Error bars indicate variability across evaluation batches and runs; the dashed line marks the 50\% win-rate threshold (DeepSeek-R1-8B).}
    \label{fig:sample_efficiency_steps}
\end{figure*}

\section{Continuous-Utility Direct Preference Optimization}

Having constructed two sets of preference pairs from strategy-conditioned chains, $\mathcal{C}_0(x)$ containing original chains and $\mathcal{C}_{\text{ref}}(x)$ containing refined chains (Section~\ref{sec:pairs}), we now describe our two-phase training procedure and provide theoretical justification for continuous-utility preference optimization. At a high level, strategy selection and reasoning quality require different utility distributions. Coarse-grained distinctions highlight \emph{which strategy to use}; fine-grained distinctions teach \emph{how} to execute it. Training simultaneously conflates these objectives, as the model cannot distinguish whether a preference derives from strategy mismatch or execution quality. Sequential training provides clear supervision. As such, in phase~1, we leverage best-vs-rest pairs from original chains in $\mathcal{C}_0$ to optimize for strategy selection, whereas in Phase~2, we turn to strategy-matched pairs from chain refinements in $\mathcal{C}_{\text{ref}}$ to optimize for execution quality.

\subsection{Bradley-Terry preference model and DPO loss}
\label{sec:cu_dpo_formulation}
We first establish the preference modeling framework underlying our approach.

\paragraph{Bradley-Terry preference model.}
The Bradley-Terry model~\citep{bradley1952rank} converts continuous quality scores into pairwise preference probabilities. For two reasoning chains $y_w$ and $y_\ell$ with continuous utilities $U(x, y_w)$ and $U(x, y_\ell)$, the probability that $y_w$ is preferred over $y_\ell$ is
\begin{equation}
P(y_w \succ y_\ell \mid x) = \sigma(\Delta U) = \frac{1}{1 + \exp(-\Delta U)},
\label{eq:bt-model}
\end{equation}
where $\Delta U = U(x, y_w) - U(x, y_\ell)$ and $\sigma$ is the sigmoid function. Larger utility margins correspond to stronger, more confident preferences, a property we exploit in margin-stratified sampling.

\paragraph{DPO loss.}
DPO~\citep{rafailov2023direct} bypasses explicit reward modeling by reparameterizing the reward function in terms of the policy. The implicit reward is $r_\theta(x,y) = \beta \left[\log \pi_\theta(y \mid x) - \log \pi_{\mathrm{ref}}(y \mid x)\right],$ where $\pi_{\mathrm{ref}}$ is a reference policy and $\beta$ is the inverse temperature controlling regularization strength. Given a preference pair $(y_w, y_\ell)$ where $y_w$ is preferred, DPO minimizes the negative log-likelihood under the Bradley-Terry model:
\begin{equation}
\mathcal{L}_{\text{DPO}}(x; y_w, y_\ell) = -\log \sigma\left(r_\theta(x, y_w) - r_\theta(x, y_\ell)\right).
\label{eq:dpo_loss}
\end{equation}
At optimality, the implicit reward difference matches the utility difference: $r_\theta(x, y_w) - r_\theta(x, y_\ell) = U(x, y_w) - U(x, y_\ell)$ (Lemma~\ref{lem:reward-utility}).

\subsection{Two-phase training}
\label{subsec:two_phase_dpo}
Consider a single-phase DPO objective trained on pairs $(y^{s_i}, y^{s_j})$ where chains are generated using different strategies $s_i \neq s_j$. If the dataset contains pairs $(y^{s_1}_{\text{best}}, y^{s_2}_{\text{good}})$ and $(y^{s_2}_{\text{good}}, y^{s_3}_{\text{bad}})$ where utilities decrease monotonically, jointly optimizing produces ambiguous gradients: the first comparison encourages preferring $s_1$ over $s_2$, while the second encourages $s_2$ over $s_3$, allocating gradient updates to ordering suboptimal strategies rather than consistently reinforcing the optimal choice.

To prevent these conflicting signals, we decompose training into two sequential phases using the preference datasets constructed in Section~\ref{sec:pairs}.

\paragraph{Phase~1: Strategy selection.}
Recall the construction of $\mathcal{D}_{\text{Phase1}}$ from Section~\ref{sec:pairs}. For each problem $x$, we identify the optimal strategy $s^*(x) = \arg\max_{s \in \mathcal{S}} U(x, s)$ and form preference pairs comparing the best strategy against every alternative:
\begin{multline}
\label{eq:phase1_dataset}
\mathcal{D}_{\text{Phase1}} = \Big\{(x, y^{(s^*(x))}, y^{(s_k)}) : s_k \in \mathcal{S} \setminus \{s^*(x)\}, \\
U(x, y^{(s^*(x))}) > U(x, y^{(s_k)}) \Big\}.
\end{multline}
This yields exactly $K-1$ pairs per problem, and every pair directly reinforces the optimal strategy choice with large utility margins (mean 0.35).

\paragraph{Phase~2: Execution quality refinement.}
After Phase~1 stabilizes strategy selection, we train on intra-strategy pairs to improve execution quality. For each strategy $s \in \mathcal{S}$, let $\mathcal{C}_s(x) = \mathcal{C}^0_s(x) \cup \mathcal{C}^{\text{ref}}_s(x)$ be the set of all chains (original and refined) using strategy $s$ on problem $x$. We form strategy-matched pairs:
\begin{multline}
\label{eq:phase2_dataset}
\mathcal{D}_{\text{Phase2}} = \bigcup_{s \in \mathcal{S}} \Big\{(x, y_i, y_j) : y_i, y_j \in \mathcal{C}_s(x), \\
U(x, y_i) > U(x, y_j) \Big\},
\end{multline}
For training in phase 2, we further by utility margin $\Delta U = U(x, y_i) - U(x, y_j)$. We partition pairs into three margin bins: strong ($\Delta U \geq 0.30$, 45\%), medium ($0.15 \leq \Delta U < 0.30$, 30\%), and weak ($\Delta U < 0.15$, 25\%). The mixture emphasizes strong margins for robust learning while including weak-margin pairs that force discrimination of subtle execution differences, which is critical for refined chains whose post-refinement utilities concentrate in $[0.6, 1.0]$. This stratification yields mean margins of 0.18-0.29 (combined 0.244) versus 0.15-0.20 under uniform sampling (Appendix~\ref{app:refinement_details}).

\subsection{Sample efficiency for continuous utilities}
\label{subsec:sample_efficiency}
We formalize the sample complexity advantage of continuous utilities. Our goal is to learn a reward function $r:\mathcal{X}\times\mathcal{Y}\to\mathbb{R}$ from a hypothesis class $\mathcal{H}$ with VC-dimension $d$ such that the induced policy $\pi_r(y \mid x) \propto \pi_{\text{ref}}(y \mid x) \exp(r(x,y)/\beta)$ maximizes expected utility.

We compare two observation models across $N$ problems: a \emph{binary preference model} observing $\ell_{ij} = \mathbb{1}[U(x,y_i) > U(x,y_j)]$ for sampled pairs, and a \emph{continuous utility model} observing $\{U(x,y_1),\dots,U(x,y_K)\}$ directly. From Appendix~\ref{app:binary_lower_bound}, learning from binary pairwise preferences requires at least
\begin{equation}
    m_{\mathrm{binary}}=\Omega\!\left(NK^2\log K+\frac{d}{\varepsilon^2}\right)
\end{equation}
comparisons to achieve $\varepsilon$-suboptimal expected utility with probability at least $1-\delta$.

\begin{theorem}[Sample complexity under continuous utilities]
\label{thm:utility-upper-main}
When continuous utilities $\{U(x,y_1),\dots,U(x,y_K)\}$ are observed for each problem, a reward function can be learned using $m_{\mathrm{utility}} = O\!\left(NK + \frac{d}{\varepsilon^2}\right)$ samples.
\end{theorem}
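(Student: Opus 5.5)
The plan is to split the sample budget into two additive pieces that mirror the two terms in $m_{\mathrm{utility}}$. The first piece recovers the within-problem quality structure. The second is a standard uniform-convergence argument for learning a reward from $\mathcal{H}$.

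\textbf{Ranking recovery.} Observing the vector $\{U(x,y_1),\dots,U(x,y_K)\}$ for a problem $x$ costs $K$ scalar observations. From these we read off the complete ranking of the $K$ candidates by sorting, and in particular $s^*(x)$, with no further queries. Over $N$ problems this step uses $NK$ samples.

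This is where the contrast with the binary model of Appendix~\ref{app:binary_lower_bound} enters. A single binary label $\ell_{ij}$ carries at most one bit. Identifying a ranking among $K!$ candidates needs $\log_2 K! = \Theta(K\log K)$ bits, and passive uniform sampling of pairs adds a coupon-collector factor of roughly $K$ on top of that. A real-valued utility reveals the position of $y_i$ relative to all other candidates at once. I will state this step deterministically, assuming noiseless utilities or ties broken by a fixed rule, so that it holds with probability one.

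\textbf{Reward regression.} Next we fit $r\in\mathcal{H}$ by minimizing the empirical squared loss $\sum_{x}\sum_k \bigl(r(x,y_k)-U(x,y_k)-c(x)\bigr)^2$. Here $c(x)$ is a per-problem offset, which we can eliminate by centering the utilities within each problem. This is legitimate because the induced policy $\pi_r \propto \pi_{\mathrm{ref}}\exp(r/\beta)$ is invariant to $c(x)$, consistent with Theorem~\ref{thm:dpo-convergence} and Lemma~\ref{lem:reward-utility}.

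Since $U\in[0,1]$, the loss is bounded. A standard VC / pseudo-dimension uniform convergence bound for a class of dimension $d$ then gives $\sup_{r\in\mathcal{H}}|\hat L(r)-L(r)|\le\varepsilon^2$ after $O\bigl((d+\log(1/\delta))/\varepsilon^2\bigr)$ samples, up to the usual logarithmic factors. For the full rate we would need a Bernstein/realizable-case refinement.

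We then translate the $L_2$ error into utility suboptimality. Because $\pi\mapsto \mathbb{E}_\pi[r]-\beta\,\mathrm{KL}(\pi\|\pi_{\mathrm{ref}})$ is maximized by the Gibbs policy, and $\mathbb{E}_{\pi_r}[U]$ is Lipschitz in $\|r-U\|_\infty$ on a finite support of size $K$, an $O(\varepsilon)$ reward error yields $\varepsilon$-suboptimal expected utility. A union bound over the two steps gives probability at least $1-\delta$.

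\textbf{Combining.} The samples from the two steps can be shared or simply added, giving $m_{\mathrm{utility}}=O(NK+d/\varepsilon^2)$.

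\textbf{Main obstacle.} The hard part is converting the regression guarantee into the $\varepsilon$-suboptimality guarantee without losing factors. Squared-loss bounds naturally give $\varepsilon^2$ error in $L_2$, while policy suboptimality needs control in something like $L_\infty$ over the candidate set, which could cost a factor of $K$. My first attempt would exploit the finite support: bound suboptimality by $\mathbb{E}_x\bigl[\max_k|r-U-c|\bigr]\le\sqrt{K}\,\|r-U-c\|_{L_2}$. I would then argue that the $NK$ term already absorbs this $K$-dependence. Failing that, I would state the result under a realizability and bounded-noise assumption so that the stated rate holds.
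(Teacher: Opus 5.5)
Your skeleton is the paper's own: the $K$ observed utilities per problem give the full ranking by sorting with no extra queries ($NK$ observations in total), a VC bound for empirical risk minimization over $\mathcal{H}$ supplies the $d/\varepsilon^2$ term, and the two are added. The paper stops at that point. It counts $r$ as ``learned'' once its ERM generalization error $O\bigl(\sqrt{d\log(m/\delta)/m}\bigr)$ is at most $\varepsilon$, and it never converts reward error into $\varepsilon$-suboptimal expected utility. Its final line even writes the bound as $O(NK\log K+d/\varepsilon^2)$, attributing the $\log K$ to sorting. The conversion you add on top is where your argument has a genuine gap.

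\textbf{The step that fails.} You claim the $NK$ term absorbs the $\sqrt{K}$ in $\mathbb{E}_x[\max_k|e_k|]\le\sqrt{K}\,\|e\|_{L_2}$, with $e=r-U-c$. Making $\sqrt{K}\,\|e\|_{L_2}\le\varepsilon$ requires $\|e\|_{L_2}\le\varepsilon/\sqrt{K}$. Even at fast rates this costs $Kd/\varepsilon^2$ labeled examples. So the $K$ multiplies the statistical term, and you actually get $O(NK+Kd/\varepsilon^2)$. That matches the statement only when $N\gtrsim d/\varepsilon^2$.

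\textbf{Why the fallback does not help.} Realizability is what your other flagged issue needs: a uniform deviation of $\varepsilon^2$ for a bounded loss takes order $d/\varepsilon^4$ samples agnostically, so $O(d/\varepsilon^2)$ only buys deviation $\varepsilon$. But the $K$ factor is a change-of-measure problem, which realizability does not touch. Your regression controls $e$ uniformly over candidates. The regularized suboptimality is $\beta\,\mathbb{E}_x\mathrm{KL}(\pi_r\|\pi_U)\le\mathbb{E}_x\bigl[\mathbb{E}_{\pi_r}e-\mathbb{E}_{\pi_U}e\bigr]$, which is measured under policy-induced distributions that can concentrate on a single candidate. Relatedly, once you center within problems, the loss couples a problem's $K$ chains. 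The i.i.d.\ unit for uniform convergence is then the problem, so the natural rate is in $N$, not $NK$.

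\textbf{How to close it.} You have two options. Either adopt the paper's weaker success criterion (small excess regression risk), in which case your first two steps already constitute its proof. Or assume a $K$-independent bound on the density ratio between these policies and the uniform distribution over candidates.
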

\begin{proof}
See Appendix~\ref{app:utility_upper_bound}.
\end{proof}

\begin{theorem}[Utility-guided sample efficiency gain]
\label{thm:sample-efficiency}
In the regime where $NK^2 \log K \gg d/\varepsilon^2$, continuous utility observation provides a multiplicative efficiency gain of $\frac{m_{\mathrm{binary}}}{m_{\mathrm{utility}}} = \Theta(K \log K).$
\end{theorem}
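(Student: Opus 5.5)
The plan is to divide the lower bound for binary preferences (stated above, from Appendix~\ref{app:binary_lower_bound}) by the upper bound of Theorem~\ref{thm:utility-upper-main}, and to use the regime assumption to drop the $d/\varepsilon^2$ terms. I will write both bounds with explicit constants: $m_{\mathrm{binary}} \ge c_1\bigl(NK^2\log K + d/\varepsilon^2\bigr)$ and $m_{\mathrm{utility}} \le C_2\bigl(NK + d/\varepsilon^2\bigr)$. Showing the ratio is $\Theta(K\log K)$ needs two inequalities: a lower bound $\Omega(K\log K)$ on the ratio and a matching upper bound $O(K\log K)$.

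For the lower bound, the regime $NK^2\log K \gg d/\varepsilon^2$ is read as $d/\varepsilon^2 \le \eta\, NK^2\log K$ for some small constant $\eta$. I will first show that the term $d/\varepsilon^2$ cannot dominate the denominator at the scale that matters. One route is to strengthen the regime assumption to $d/\varepsilon^2 = O(NK)$, which the paper's "$\gg$" phrasing implicitly requires. Then $m_{\mathrm{utility}} = \Theta(NK)$, while $m_{\mathrm{binary}} = \Theta(NK^2\log K)$, so the ratio is $\Theta(K\log K)$ directly.

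If only the weaker assumption $d/\varepsilon^2 \ll NK^2\log K$ holds, the statement has to be read in the regime where both bounds are dominated by their $N$-dependent terms. I will say this explicitly, since otherwise the ratio interpolates between $1$ and $K\log K$.

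For the upper bound on the ratio, I need a matching upper bound $m_{\mathrm{binary}} = O(NK^2\log K + d/\varepsilon^2)$. It comes from a standard coupon-collector argument. Uniform passive sampling of pairs reveals all $\binom{K}{2}$ comparisons per problem after $O(K^2\log K)$ draws per problem, which recovers the full ranking. Combining this with the usual uniform-convergence bound over $\mathcal{H}$ gives the $d/\varepsilon^2$ term. Under the regime assumption this yields $m_{\mathrm{binary}} = \Theta(NK^2\log K)$.

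Similarly, $m_{\mathrm{utility}} \ge NK$ holds trivially, since each of the $NK$ utilities must be observed at least once to determine $s^*(x)$. Otherwise an adversary can alter an unobserved utility and change the optimum. This gives $m_{\mathrm{utility}} = \Theta(NK)$.

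The main obstacle is this bookkeeping of tightness: the theorem asserts $\Theta$, but the earlier results supply only one-sided bounds (an $\Omega$ for binary and an $O$ for utility). I will handle this by proving the two complementary bounds as above: the coupon-collector upper bound for binary and the trivial observation lower bound for utilities. I will also state the regime precisely as $d/\varepsilon^2 = O(NK)$ so that both denominators are governed by their combinatorial terms. With that done, the ratio is $\Theta(NK^2\log K)/\Theta(NK) = \Theta(K\log K)$.
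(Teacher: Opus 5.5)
Your proposal is correct. It has the same skeleton as the paper's proof, which divides $\Omega(NK^2\log K)$ by $O(NK)$ after declaring the $d/\varepsilon^2$ term negligible, and then cites ``matching upper bounds'' from Appendix~\ref{app:binary_lower_bound} for the $\Theta$.

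You depart from the paper exactly where it is loose, and in both places you are right.

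\textbf{The regime hypothesis.} The stated regime $NK^2\log K \gg d/\varepsilon^2$ does not make $d/\varepsilon^2$ negligible against the $NK$ in $m_{\mathrm{utility}}$. For example, $d/\varepsilon^2 = NK^{3/2}$ satisfies it, yet the quotient of the two bounds then gives only $\Omega(\sqrt{K}\log K)$. For $d/\varepsilon^2 = \eta\,NK^2\log K$ it gives only $\Omega(1/\eta)$. So the paper's ``negligible'' step silently uses your strengthened hypothesis $d/\varepsilon^2 = O(NK)$.

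\textbf{The upper half of $\Theta$.} This requires an upper bound on $m_{\mathrm{binary}}$ \emph{and} a lower bound on $m_{\mathrm{utility}}$. The paper only points to the former, namely the coupon-collector sufficiency bound. Your adversary argument $m_{\mathrm{utility}} \ge NK$ supplies the missing half.

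The paper's version buys brevity; yours buys a statement whose hypothesis actually implies its conclusion.

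Two cautions:
\begin{itemize}
\item Keep the learning target as full-ranking recovery throughout. If only $s^*(x)$ had to be found, passive binary sampling needs just $\Theta(K^2)$ draws per problem: every non-top chain must be seen losing once, and the bottleneck is the top--second pair. The ratio would then be $\Theta(K)$, not $\Theta(K\log K)$. Your adversary argument is unaffected, since the ranking determines $s^*(x)$.
\item Take the $NK$ sample count from the statement of Theorem~\ref{thm:utility-upper-main}, not from the last line of its proof in Appendix~\ref{app:utility_upper_bound}. That line writes $O(NK\log K + d/\varepsilon^2)$, where the $\log K$ is computational, and using it would give only a $\Theta(K)$ ratio.
\end{itemize}
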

\begin{proof}
Under the regime assumption, the PAC learning term becomes negligible, yielding $\frac{m_{\mathrm{binary}}}{m_{\mathrm{utility}}} = \frac{\Omega(NK^2 \log K)}{O(NK)} = \Omega(K \log K).$ Matching upper bounds from Appendix~\ref{app:binary_lower_bound} establish $\Theta(K \log K)$. 
\end{proof}

For $K=8$ strategies, this yields theoretical speedup of approximately $24\times$. Our empirical 5,830 pairs achieve $2.16\times$ compression versus 12,600 exhaustive pairwise comparisons ($450 \times \binom{8}{2} = 450 \times 28$). This gap reflects: (1) Phase~1's best-vs-all construction uses only $K-1 = 7$ pairs per problem rather than $\binom{K}{2} = 28$ exhaustive pairs, and (2) binary DPO's inability to compare within correct/incorrect categories, reducing practical requirements to $\sim6,200$ pairs.

\begin{table*}[!t]
\centering
\caption{\textbf{Strategy-problem alignment across domains.} Distribution of optimal strategies across 450 training problems grouped by mathematical domain. Avg.\ utility is the mean utility of the best strategy per problem; avg.\ margin is the mean utility gap between best and worst strategies, indicating problem-strategy sensitivity.}
\label{tab:domain_strategy}
\small
\setlength{\tabcolsep}{6pt}
\begin{tabular}{lrllcc}
\toprule
\textbf{Domain} & \textbf{N} & \textbf{Best strategy} & \textbf{Second-best} & \textbf{Avg.\ utility} & \textbf{Avg.\ margin} \\
\midrule
Calculus & 200 & \texttt{step\_by\_step} (25\%) & \texttt{alternative} (16\%) & 0.763 & 0.32 \\
Proof    & 135 & \texttt{direct} (27\%)         & \texttt{verification} (16\%) & 0.768 & 0.27 \\
Other    &  75 & \texttt{step\_by\_step} (19\%) & \texttt{verification} (17\%) & 0.819 & 0.30 \\
Algebra  &  22 & \texttt{algebraic} (27\%)      & \texttt{backward} (18\%)     & 0.755 & 0.35 \\
Analysis &  18 & \texttt{alternative} (33\%)    & \texttt{direct} (22\%)       & 0.847 & 0.28 \\
\midrule
\rowcolor{gray!10} \textbf{Overall} & \textbf{450} & — & — & \textbf{0.777} & \textbf{0.30} \\
\bottomrule
\end{tabular}
\end{table*}

\begin{figure}[t!]
    \centering
    \includegraphics[width=0.8\linewidth]{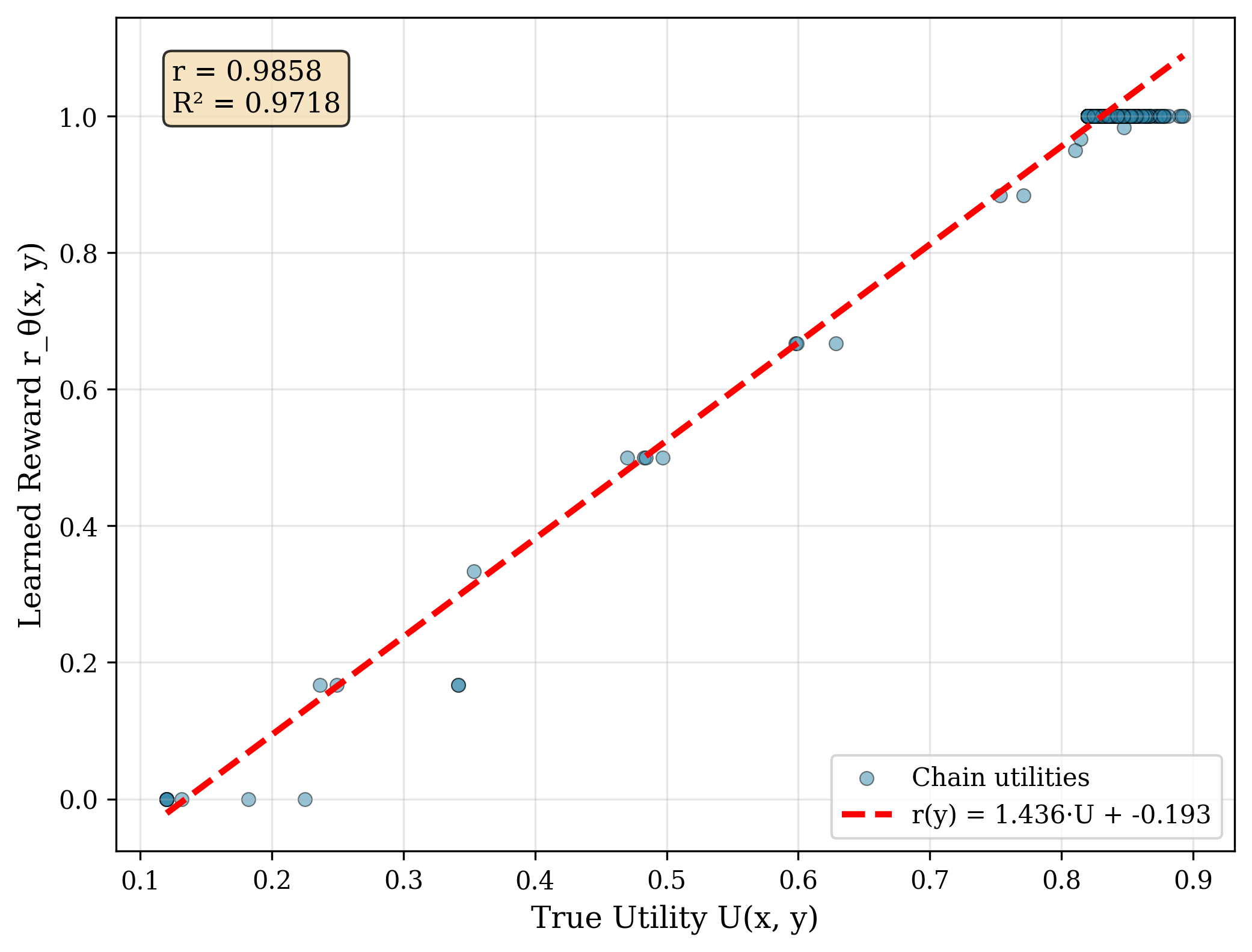}
    \caption{\textbf{Empirical evidence for reward--utility alignment.}
    Learned implicit reward $r_\theta(x,y)=\beta(\log\pi_\theta-\log\pi_{\mathrm{ref}})$ aligns linearly with utility $U(x,y)$, supporting the relation $r_\theta(x,y)=U(x,y)+c(x)$ and Theorem~\ref{thm:dpo-convergence}.}
    \label{fig:reward_utility_alignment}
\end{figure}

\begin{remark}
\label{rem:empirical_speedup}
Empirically, \textbf{\textcolor{MidnightBlue}{CU-}\textcolor{NavyBlue}{DPO}} reaches higher win rates with fewer training iterations across all datasets (Figures~\ref{fig:sample_efficiency_steps}).
\end{remark}

\subsection{Convergence to utility-maximizing policy}
\label{subsec:convergence}

We show that DPO with utility-weighted preferences converges to the entropy-regularized utility-maximizing policy.

\begin{lemma}[Reward-utility alignment]
\label{lem:reward-utility}
At the global minimum of the DPO objective (Eq.~\ref{eq:dpo_loss}), the implicit reward satisfies $r_\theta(x,y_i) - r_\theta(x,y_j) = U(x,y_i) - U(x,y_j)$ for all pairs $(i,j)$, implying $r_\theta(x,y) = U(x,y) + c(x)$ for some problem-dependent constant $c(x)$.
\end{lemma}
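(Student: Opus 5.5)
The approach is to treat the DPO objective at the population level: pairs $(y_i,y_j)$ are drawn for each problem $x$, and the label of which chain is "winner" is generated by the Bradley--Terry model (Eq.~\ref{eq:bt-model}) with margin $\Delta U$. Write $p_{ij}=\sigma(U(x,y_i)-U(x,y_j))$ and $z_{ij}=r_\theta(x,y_i)-r_\theta(x,y_j)$. Symmetrizing over the orientation of each pair, the expected DPO loss (Eq.~\ref{eq:dpo_loss}) contributed by an unordered pair $\{i,j\}$ is
\begin{equation*}
\ell_{ij}(z_{ij}) = -p_{ij}\log\sigma(z_{ij}) - (1-p_{ij})\log\sigma(-z_{ij}),
\end{equation*}
which is the binary cross-entropy between $p_{ij}$ and $\sigma(z_{ij})$. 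The total objective is a positively weighted sum of such terms over the pairs in the support of the pair-sampling distribution.

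The first key step is a pointwise minimization. Each $\ell_{ij}$ is strictly convex in $z_{ij}$, and its derivative is $\sigma(z_{ij})-p_{ij}$. Hence it has the unique minimizer $\sigma(z_{ij})=p_{ij}$, and since $\sigma$ is injective this means $z_{ij}=U(x,y_i)-U(x,y_j)$.

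The second step is to check that all these pairwise minimizers can be attained simultaneously, so that the minimum of the sum equals the sum of the pointwise minima. This is the main point requiring care. Since the $z_{ij}$ are not free variables but differences of a single function $r_\theta$, we exhibit a feasible choice: take $r(x,y)=U(x,y)$. Under the standard DPO assumption that the policy class is expressive enough (as in \citet{rafailov2023direct}), this reward is realized by $\pi_\theta(y\mid x)\propto\pi_{\mathrm{ref}}(y\mid x)\exp(U(x,y)/\beta)$, because the normalizer cancels in the differences. This simultaneously attains every pointwise minimum, so any global minimizer must satisfy $z_{ij}=U(x,y_i)-U(x,y_j)$ on every pair with positive sampling weight. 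I would state explicitly the assumptions being used: realizability, and that every pair $(i,j)$ among the $K$ candidates of interest has positive probability, or at least that the comparison graph on the candidates for each $x$ is connected.

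Finally, I would derive the additive-constant form. Fix a reference candidate $y_1$ for each $x$ and set $c(x)=r_\theta(x,y_1)-U(x,y_1)$. If all pairs are observed, then $r_\theta(x,y)-U(x,y)=c(x)$ follows directly from the pair $(y,y_1)$. If only a connected comparison graph is observed, as with the best-vs-all construction $\mathcal{D}_{\text{Phase1}}$, a star graph centered at $y^{(s^*(x))}$, the same conclusion follows by summing the differences along a path to $y_1$: the telescoping sums of $z$ and of $\Delta U$ agree. The constant cannot be pinned down further, because shifting $r_\theta(x,\cdot)$ by any function of $x$ leaves every $z_{ij}$ unchanged; this is exactly the problem-dependent constant $c(x)$ in the statement.

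The main obstacle is the realizability and identifiability assumption in the second step. Without it, the global minimum only matches utility differences on the observed pairs, and only within the constraints of the policy class. I would therefore state it as a hypothesis inherited from the standard DPO analysis.
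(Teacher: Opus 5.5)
Your proposal is correct and follows the same route as the paper's proof. At the optimum the predicted preference probabilities match the Bradley--Terry ones, $\sigma(\Delta r_{ij})=\sigma(\Delta U_{ij})$, injectivity of $\sigma$ gives $\Delta r_{ij}=\Delta U_{ij}$, and telescoping along paths yields $r_\theta(x,y)=U(x,y)+c(x)$.

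Your version is tighter than the paper's because it states three assumptions the paper leaves implicit. First, the labels must be drawn from the Bradley--Terry model. With deterministic winners, as in $\mathcal{D}_{\text{Phase1}}$, the paper's own gradient $\sigma(\Delta r_{ij})-1$ never vanishes, so no minimizer exists. Second, realizability is what lets all pairwise minima be attained at once. Third, the telescoping step only needs the comparison graph to be connected, not the consecutive pairs the paper sums over.
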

\begin{proof}[Proof sketch]
The DPO objective minimizes negative log-likelihood under the Bradley-Terry model. At the optimum, the model's predicted preference probabilities must match the true preference probabilities: $\sigma(\Delta r_{ij}) = \sigma(\Delta U_{ij})$. Since $\sigma$ is strictly monotone, this implies $\Delta r_{ij} = \Delta U_{ij}$ for all pairs. These pairwise constraints uniquely determine $r_\theta(x,y)$ up to an additive constant $c(x)$. Full proof is in Appendix~\ref{app:reward_utility_proof}.
\end{proof}

\begin{table}[t]
\centering
\caption{\textbf{In-distribution strategy selection.} Comparison between prompt-only baselines and the same base model after \textbf{\textcolor{MidnightBlue}{CU-}\textcolor{NavyBlue}{DPO}} Phase~1 fine-tuning, isolating improvements in \emph{strategy selection} rather than reasoning accuracy.}
\label{tab:strategy_selection_id}
\small
\setlength{\tabcolsep}{5pt}
\renewcommand{\arraystretch}{0.98}
\begin{tabular}{llccc}
\toprule
\textbf{Base model} & \textbf{Setting} & \textbf{Acc.} & \textbf{Top-3} & \textbf{Spear.\ $\rho$} \\
\midrule
\multirow{2}{*}{\shortstack[l]{DeepSeek-Math\\(7B)}} 
    & Baseline & 0.46 & 0.76 & 0.41 \\
    & \cellcolor{highlightblue}+ \textbf{\textcolor{MidnightBlue}{CU-}\textcolor{NavyBlue}{DPO}} & \cellcolor{highlightblue}\textbf{0.78} & \cellcolor{highlightblue}\textbf{0.94} & \cellcolor{highlightblue}\textbf{0.71} \\
\midrule
\multirow{2}{*}{\shortstack[l]{DeepSeek-R1\\(8B)}}   
    & Baseline & 0.44 & 0.74 & 0.39 \\
    & \cellcolor{highlightblue}+ \textbf{\textcolor{MidnightBlue}{CU-}\textcolor{NavyBlue}{DPO}} & \cellcolor{highlightblue}\textbf{0.76} & \cellcolor{highlightblue}\textbf{0.93} & \cellcolor{highlightblue}\textbf{0.69} \\
\midrule
\multirow{2}{*}{\shortstack[l]{Gemma-2\\(9B)}}       
    & Baseline & 0.39 & 0.69 & 0.33 \\
    & \cellcolor{highlightblue}+ \textbf{\textcolor{MidnightBlue}{CU-}\textcolor{NavyBlue}{DPO}} & \cellcolor{highlightblue}\textbf{0.71} & \cellcolor{highlightblue}\textbf{0.91} & \cellcolor{highlightblue}\textbf{0.65} \\
\midrule
\multirow{2}{*}{\shortstack[l]{Mistral-7B\\~}}         
    & Baseline & 0.35 & 0.66 & 0.31 \\
    & \cellcolor{highlightblue}+ \textbf{\textcolor{MidnightBlue}{CU-}\textcolor{NavyBlue}{DPO}} & \cellcolor{highlightblue}\textbf{0.68} & \cellcolor{highlightblue}\textbf{0.89} & \cellcolor{highlightblue}\textbf{0.62} \\
\midrule
\multirow{2}{*}{\shortstack[l]{Mistral-8x7B\\~}}
    & Baseline & 0.44 & 0.74 & 0.39 \\
    & \cellcolor{highlightblue}+ \textbf{\textcolor{MidnightBlue}{CU-}\textcolor{NavyBlue}{DPO}} & \cellcolor{highlightblue}\textbf{0.77} & \cellcolor{highlightblue}\textbf{0.93} & \cellcolor{highlightblue}\textbf{0.70} \\
\midrule
\multirow{2}{*}{\shortstack[l]{Qwen3\\(8B)}}         
    & Baseline & 0.43 & 0.73 & 0.38 \\
    & \cellcolor{highlightblue}+ \textbf{\textcolor{MidnightBlue}{CU-}\textcolor{NavyBlue}{DPO}} & \cellcolor{highlightblue}\textbf{0.75} & \cellcolor{highlightblue}\textbf{0.92} & \cellcolor{highlightblue}\textbf{0.68} \\
\midrule
\multirow{2}{*}{\shortstack[l]{Llama-3\\(8B)}}       
    & Baseline & 0.40 & 0.70 & 0.35 \\
    & \cellcolor{highlightblue}+ \textbf{\textcolor{MidnightBlue}{CU-}\textcolor{NavyBlue}{DPO}} & \cellcolor{highlightblue}\textbf{0.73} & \cellcolor{highlightblue}\textbf{0.91} & \cellcolor{highlightblue}\textbf{0.66} \\
\midrule
\multirow{2}{*}{\shortstack[l]{Llama-3.1\\(8B)}}     
    & Baseline & 0.42 & 0.72 & 0.37 \\
    & \cellcolor{highlightblue}+ \textbf{\textcolor{MidnightBlue}{CU-}\textcolor{NavyBlue}{DPO}} & \cellcolor{highlightblue}\textbf{0.74} & \cellcolor{highlightblue}\textbf{0.92} & \cellcolor{highlightblue}\textbf{0.67} \\
\bottomrule
\end{tabular}
\end{table}

\begin{corollary}[DPO converges to utility-maximizing policy]
\label{thm:dpo-convergence}
The optimal policy $\pi_\theta^*$ learned by DPO with utility-based preferences (Eq.~\ref{eq:bt-model}) satisfies
\begin{equation}
\pi_\theta^*(y \mid x) \propto \pi_{\mathrm{ref}}(y \mid x) \exp\left(\frac{U(x,y)}{\beta}\right),
\end{equation}
which is the entropy-regularized utility-maximizing policy.
\end{corollary}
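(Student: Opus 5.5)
The plan is to derive the corollary directly from Lemma~\ref{lem:reward-utility} by inverting the DPO reparameterization of the implicit reward. At the global minimum of the DPO objective, Lemma~\ref{lem:reward-utility} gives $r_{\theta^*}(x,y) = U(x,y) + c(x)$ for every $y$ in the support under consideration. By definition, $r_\theta(x,y) = \beta[\log \pi_\theta(y\mid x) - \log \pi_{\mathrm{ref}}(y\mid x)]$. Equating the two expressions and solving for the policy gives
\begin{equation*}
\pi_\theta^*(y\mid x) = \pi_{\mathrm{ref}}(y\mid x)\exp\!\left(\frac{U(x,y) + c(x)}{\beta}\right).
\end{equation*}
The factor $\exp(c(x)/\beta)$ does not depend on $y$, so it is absorbed into the normalizer. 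Normalization of $\pi_\theta^*(\cdot\mid x)$ then forces $\exp(-c(x)/\beta) = Z(x) := \sum_y \pi_{\mathrm{ref}}(y\mid x)\exp(U(x,y)/\beta)$. This yields the stated proportionality.

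A caution on scaling: the implicit reward already carries the factor $\beta$, so the inversion strictly produces $\exp(r/\beta)$ with $r = U + c$. I will follow the paper's convention for the Bradley--Terry model, which compares $\sigma(\Delta r)$ with $\sigma(\Delta U)$, and keep the $1/\beta$ as written. If the normalization of $r_\theta$ is instead taken without $\beta$, the exponent scales accordingly. I will remark on this rather than alter the statement.

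The second step identifies this policy as the maximizer of the entropy-regularized objective
\begin{equation*}
J(\pi) = \mathbb{E}_{y\sim\pi(\cdot\mid x)}[U(x,y)] - \beta\,\mathrm{KL}\big(\pi(\cdot\mid x)\,\|\,\pi_{\mathrm{ref}}(\cdot\mid x)\big).
\end{equation*}
I will use the standard Gibbs variational identity. Writing $\pi^\star \propto \pi_{\mathrm{ref}}\exp(U/\beta)$, one checks that
\begin{equation*}
J(\pi) = \beta\log Z(x) - \beta\,\mathrm{KL}(\pi\,\|\,\pi^\star).
\end{equation*}
Since the KL divergence is nonnegative and vanishes only at $\pi = \pi^\star$, the unique maximizer is $\pi^\star$, which matches the policy obtained in the first step. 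Note that $U/\beta$ here is consistent with the $\exp(r/\beta)$ form of $\pi_r$ used in Section~\ref{subsec:sample_efficiency}.

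The main obstacle is the passage from pairwise constraints to a pointwise identity on the whole response space. Lemma~\ref{lem:reward-utility} only controls $r_\theta$ on responses connected through observed pairs, and DPO leaves $\pi_\theta$ unconstrained off the support. I will handle this by restricting the claim to the support of $\pi_{\mathrm{ref}}(\cdot\mid x)$. On that support I will assume, as in the lemma, that the comparison graph over candidates is connected and that the policy class is expressive enough to realize any reward. Connectivity gives a single constant $c(x)$ per problem, and realizability ensures the global minimum is attained. With these assumptions stated, the rest is the routine algebra above.
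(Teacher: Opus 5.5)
Your proposal is correct and follows the paper's proof. The paper likewise takes $r_\theta(x,y)=U(x,y)+c(x)$ from Lemma~\ref{lem:reward-utility}, writes $\pi_\theta^*\propto\pi_{\mathrm{ref}}\exp(r_\theta/\beta)$ by citing the DPO closed form (which is exactly your inversion of $r_\theta=\beta\log(\pi_\theta/\pi_{\mathrm{ref}})$), and absorbs $\exp(c(x)/\beta)$ into the normalizer. Your Gibbs-variational proof that this policy maximizes $\mathbb{E}_\pi[U]-\beta\,\mathrm{KL}(\pi\,\|\,\pi_{\mathrm{ref}})$, and your explicit connectivity and realizability assumptions, go beyond the paper, which asserts the former via citation and leaves the latter implicit. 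One correction: it is the coverage (connectivity) assumption, not the restriction to $\mathrm{supp}\,\pi_{\mathrm{ref}}(\cdot\mid x)$, that does the work, since for a softmax model that support is typically every response.
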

\begin{proof}
By Lemma~\ref{lem:reward-utility}, at optimality $r_\theta(x,y) = U(x,y) + c(x)$. The DPO closed-form solution~\citep{rafailov2023direct} gives $\pi_\theta^*(y \mid x) \propto \pi_{\mathrm{ref}}(y \mid x) \exp(r_\theta(x,y)/\beta)$. Substituting and noting that $\exp(c(x)/\beta)$ is constant in $y$ yields the result.
\end{proof}

\begin{remark}[Problem-dependent constant invariance]
\label{rem:constant-invariance}
The problem-dependent constant $c(x)$ does not affect policy ranking: for any two chains $y_i, y_j$ given problem $x$, the probability ratio satisfies $\frac{\pi_\theta^*(y_i \mid x)}{\pi_\theta^*(y_j \mid x)} = \frac{\pi_{\mathrm{ref}}(y_i \mid x)}{\pi_{\mathrm{ref}}(y_j \mid x)} \exp\left(\frac{U(x,y_i) - U(x,y_j)}{\beta}\right),$ where $c(x)$ cancels. Thus, even if $c(x)$ varies widely across problems, the learned policy correctly ranks chains within each problem by utility differences $\Delta U$, which is the training objective. Cross-problem comparisons are not required for generation.
\end{remark}

\noindent Figure~\ref{fig:reward_utility_alignment} empirically validates Lemma~\ref{lem:reward-utility}: the learned implicit reward exhibits strong linear correlation ($R^2=0.97$) with ground-truth utilities, confirming that DPO recovers the utility function up to a problem-dependent constant.

\section{Experiments}
\label{sec:Experiments}
We evaluate \textbf{\textcolor{MidnightBlue}{CU-}\textcolor{NavyBlue}{DPO}} on three in-distribution datasets (DeepMath, HARDMath2, ProofNet) and three out-of-distribution datasets (GSM8K, MATH-500, U-MATH). The 450 training problems are drawn from the in-distribution benchmarks; all evaluations use held-out test sets (150 problems per dataset) to ensure no data leakage.
All experiments were conducted on an AWS g4dn.12xlarge instance.
\begin{table}[t]
\centering
\caption{\textbf{Out-of-distribution strategy selection.} DeepSeek-R1-Distill-Llama-8B evaluated on benchmarks unseen during training to test zero-shot transfer of learned strategy selection capabilities.}
\label{tab:strategy_selection_ood}
\small
\renewcommand{\arraystretch}{0.98}
\begin{tabular}{lccc}
\toprule
\textbf{Benchmark} & \textbf{Baseline} & \textbf{+ \textbf{\textcolor{MidnightBlue}{CU-}\textcolor{NavyBlue}{DPO}}} & \textbf{$\Delta$} \\
\midrule
GSM8K (Acc., \%) & 48.3 & \textbf{62.7} & \textcolor{darkgreen}{\textbf{+14.4}} \\
MATH-500 (Acc., \%) & 42.6 & \textbf{65.9} & \textcolor{darkgreen}{\textbf{+23.3}} \\
U-MATH (Acc., \%) & 31.2 & \textbf{49.8} & \textcolor{darkgreen}{\textbf{+18.6}} \\
\bottomrule
\end{tabular}
\end{table}

\subsection{Problem-strategy correlation analysis}
\label{sec:strategy_correlation}
A core premise of our approach is that the optimal strategy depends on problem structure. As an exploratory analysis of this premise, we group the 450 training problems into coarse mathematical domains via keyword matching and analyze which strategy produces the highest-utility chain for each problem, which varies systematically across domains.

Table~\ref{tab:domain_strategy} shows that optimal strategies shift across domains and no single strategy dominates universally. These non-uniform patterns, together with measured average margins, indicate that strategy choice is systematically problem-dependent, motivating preference optimization over strategy-conditioned candidates per domain.

\begin{table*}[t]
\centering
\caption{In-distribution and out-of-distribution reasoning accuracy (Pass@1, \%) after two-phase \textbf{\textcolor{MidnightBlue}{CU-}\textcolor{NavyBlue}{DPO}} training. $\Delta$ shows Phase 2 improvement over Phase 1.}
\small
\begin{minipage}[t]{0.48\textwidth}
\centering
\subcaption{In-distribution on DeepMath, HARDMath2, and ProofNet.}
\label{tab:reasoning_id_cudpo}
\footnotesize
\setlength{\tabcolsep}{5pt}
\renewcommand{\arraystretch}{0.98}
\begin{tabular}{llcccc}
\toprule
\textbf{Base model} & \textbf{Setting} & \textbf{DM} & \textbf{HM2} & \textbf{PN} & \textbf{$\Delta$} \\
\midrule
\multirow{2}{*}{\shortstack[l]{DeepSeek-R1\\(8B)}} 
    & Baseline   & 64.2 & 42.1 & 38.5 & --- \\
    & \cellcolor{highlightblue}+ \textbf{\textcolor{MidnightBlue}{CU-}\textcolor{NavyBlue}{DPO}}   & \cellcolor{highlightblue}\textbf{69.8} & \cellcolor{highlightblue}\textbf{48.7} & \cellcolor{highlightblue}\textbf{44.2} & \cellcolor{highlightblue}\textbf{+5.0} \\
\midrule
\multirow{2}{*}{\shortstack[l]{Qwen3\\(8B)}}       
    & Baseline   & 61.5 & 39.4 & 35.2 & --- \\
    & \cellcolor{highlightblue}+ \textbf{\textcolor{MidnightBlue}{CU-}\textcolor{NavyBlue}{DPO}}   & \cellcolor{highlightblue}\textbf{67.2} & \cellcolor{highlightblue}\textbf{45.9} & \cellcolor{highlightblue}\textbf{41.3} & \cellcolor{highlightblue}\textbf{+5.4} \\
\midrule
\multirow{2}{*}{\shortstack[l]{Llama-3.1\\(8B)}}   
    & Baseline   & 28.4 & 18.2 & 14.8 & --- \\
    & \cellcolor{highlightblue}+ \textbf{\textcolor{MidnightBlue}{CU-}\textcolor{NavyBlue}{DPO}}   & \cellcolor{highlightblue}\textbf{34.8} & \cellcolor{highlightblue}\textbf{24.7} & \cellcolor{highlightblue}\textbf{20.9} & \cellcolor{highlightblue}\textbf{+5.6} \\
\midrule
\multirow{2}{*}{\shortstack[l]{Gemma-2\\(9B)}}     
    & Baseline   & 26.1 & 15.5 & 12.1 & --- \\
    & \cellcolor{highlightblue}+ \textbf{\textcolor{MidnightBlue}{CU-}\textcolor{NavyBlue}{DPO}}   & \cellcolor{highlightblue}\textbf{32.4} & \cellcolor{highlightblue}\textbf{21.8} & \cellcolor{highlightblue}\textbf{17.6} & \cellcolor{highlightblue}\textbf{+5.6} \\
\midrule
\multirow{2}{*}{\shortstack[l]{Mistral-8x7B\\~}}
    & Baseline   & 22.8 & 12.4 &  9.5 & --- \\
    & \cellcolor{highlightblue}+ \textbf{\textcolor{MidnightBlue}{CU-}\textcolor{NavyBlue}{DPO}}   & \cellcolor{highlightblue}\textbf{28.9} & \cellcolor{highlightblue}\textbf{18.3} & \cellcolor{highlightblue}\textbf{14.8} & \cellcolor{highlightblue}\textbf{+5.5} \\
\midrule
\multirow{2}{*}{\shortstack[l]{Llama-3\\(8B)}}     
    & Baseline   & 18.3 &  8.1 &  6.4 & --- \\
    & \cellcolor{highlightblue}+ \textbf{\textcolor{MidnightBlue}{CU-}\textcolor{NavyBlue}{DPO}}   & \cellcolor{highlightblue}\textbf{24.7} & \cellcolor{highlightblue}\textbf{14.2} & \cellcolor{highlightblue}\textbf{11.8} & \cellcolor{highlightblue}\textbf{+5.6} \\
\midrule
\multirow{2}{*}{\shortstack[l]{Mistral-7B\\~}}       
    & Baseline   & 14.2 &  5.3 &  3.8 & --- \\
    & \cellcolor{highlightblue}+ \textbf{\textcolor{MidnightBlue}{CU-}\textcolor{NavyBlue}{DPO}}   & \cellcolor{highlightblue}\textbf{20.3} & \cellcolor{highlightblue}\textbf{11.7} & \cellcolor{highlightblue}\textbf{9.2} & \cellcolor{highlightblue}\textbf{+5.7} \\
\bottomrule
\end{tabular}
\end{minipage}%
\hfill
\begin{minipage}[t]{0.48\textwidth}
\centering
\subcaption{Out-of-distribution on GSM8K, MATH-500, and U-MATH.}
\label{tab:reasoning_ood_cudpo}
\footnotesize
\setlength{\tabcolsep}{5pt}
\renewcommand{\arraystretch}{0.98}
\begin{tabular}{llcccc}
\toprule
\textbf{Base model} & \textbf{Setting} & \textbf{GSM} & \textbf{M500} & \textbf{UM} & \textbf{$\Delta$} \\
\midrule
\multirow{2}{*}{\shortstack[l]{DeepSeek-R1\\(8B)}} 
    & Baseline   & 69.8 & 59.5 &  6.9 & --- \\
    & \cellcolor{highlightblue}+ \textbf{\textcolor{MidnightBlue}{CU-}\textcolor{NavyBlue}{DPO}}   & \cellcolor{highlightblue}\textbf{72.3} & \cellcolor{highlightblue}\textbf{64.8} & \cellcolor{highlightblue}\textbf{8.2} & \cellcolor{highlightblue}\textbf{+1.8} \\
\midrule
\multirow{2}{*}{\shortstack[l]{Qwen3\\(8B)}}       
    & Baseline   & 66.4 & 56.2 &  5.8 & --- \\
    & \cellcolor{highlightblue}+ \textbf{\textcolor{MidnightBlue}{CU-}\textcolor{NavyBlue}{DPO}}   & \cellcolor{highlightblue}\textbf{68.9} & \cellcolor{highlightblue}\textbf{61.3} & \cellcolor{highlightblue}\textbf{7.1} & \cellcolor{highlightblue}\textbf{+2.2} \\
\midrule
\multirow{2}{*}{\shortstack[l]{Llama-3.1\\(8B)}}   
    & Baseline   & 58.2 & 48.3 &  4.2 & --- \\
    & \cellcolor{highlightblue}+ \textbf{\textcolor{MidnightBlue}{CU-}\textcolor{NavyBlue}{DPO}}   & \cellcolor{highlightblue}\textbf{60.7} & \cellcolor{highlightblue}\textbf{52.9} & \cellcolor{highlightred}3.8 & \cellcolor{highlightblue}\textbf{+1.9} \\
\midrule
\multirow{2}{*}{\shortstack[l]{Gemma-2\\(9B)}}     
    & Baseline   & 52.7 & 42.8 &  3.1 & --- \\
    & \cellcolor{highlightblue}+ \textbf{\textcolor{MidnightBlue}{CU-}\textcolor{NavyBlue}{DPO}}   & \cellcolor{highlightblue}\textbf{55.4} & \cellcolor{highlightblue}\textbf{47.6} & \cellcolor{highlightblue}\textbf{3.4} & \cellcolor{highlightblue}\textbf{+2.2} \\
\midrule
\multirow{2}{*}{\shortstack[l]{Mistral-8x7B\\~}}
    & Baseline   & 48.5 & 38.6 &  2.4 & --- \\
    & \cellcolor{highlightblue}+ \textbf{\textcolor{MidnightBlue}{CU-}\textcolor{NavyBlue}{DPO}}   & \cellcolor{highlightblue}\textbf{50.2} & \cellcolor{highlightred}37.9 & \cellcolor{highlightred}2.1 & \cellcolor{highlightblue}\textbf{+0.1} \\
\midrule
\multirow{2}{*}{\shortstack[l]{Llama-3\\(8B)}}     
    & Baseline   & 44.2 & 34.1 &  1.8 & --- \\
    & \cellcolor{highlightblue}+ \textbf{\textcolor{MidnightBlue}{CU-}\textcolor{NavyBlue}{DPO}}   & \cellcolor{highlightblue}\textbf{46.8} & \cellcolor{highlightblue}\textbf{37.4} & \cellcolor{highlightred}1.5 & \cellcolor{highlightblue}\textbf{+1.6} \\
\midrule
\multirow{2}{*}{\shortstack[l]{Mistral-7B\\~}}       
    & Baseline   & 38.6 & 28.4 &  1.2 & --- \\
    & \cellcolor{highlightblue}+ \textbf{\textcolor{MidnightBlue}{CU-}\textcolor{NavyBlue}{DPO}}   & \cellcolor{highlightred}37.9 & \cellcolor{highlightblue}\textbf{30.8} & \cellcolor{highlightred}1.0 & \cellcolor{highlightblue}\textbf{+0.4} \\
\bottomrule
\end{tabular}
\end{minipage}
\end{table*}

\subsection{Strategy selection}
We compare against prompt-selected baselines, where models are given prompts to select strategies themselves without any fine-tuning. We evaluate using three metrics: accuracy (final answer correctness), Top-3 selection rate (whether the chosen strategy ranks among the three best for that problem), and Spearman's $\rho$ (rank correlation measuring how well the model orders all strategies from best to worst). Additional comparisons with preference learning baselines and inference-time baselines are provided in Appendix~\ref{app:Baselines}, where we test both in-distribution and out-of-distribution dataset performances.

\noindent\textbf{In-distribution (Table~\ref{tab:strategy_selection_id}).}
Phase~1 separates choosing a strategy from executing it. Prompt-only baselines often select mismatched strategies despite seeing all prompts, while \textbf{\textcolor{MidnightBlue}{CU-}\textcolor{NavyBlue}{DPO}} Phase~1 shifts selections toward strategies that better match problem structure and improves Spearman's $\rho$, indicating a more consistent ordering over strategies.

\noindent\textbf{Out-of-distribution (Table~\ref{tab:strategy_selection_ood}).}
The same behavior transfers to unseen benchmarks: \textbf{\textcolor{MidnightBlue}{CU-}\textcolor{NavyBlue}{DPO}} improves strategy choice under domain shift, with larger gains when mismatch is most harmful. Out-of-distribution evaluation uses DeepSeek-R1-Distill-Llama-8B, comparing the strategy-prompted baseline to the Phase~1 model trained on DeepMath/HARDMath2/ProofNet and evaluated zero-shot on GSM8K/MATH-500/U-MATH.

\subsection{Execution refinement and downstream reasoning}
\label{sec:phase2_downstream}
\noindent\textbf{Phase~2 targets execution quality.}
After Phase~1 learns \emph{which strategy to invoke}, Phase~2 improves \emph{how well} that strategy is executed by training on preference pairs formed from original chains and their self-refined counterparts. Refinement turns locally fixable failures into near-miss chains that differ by small, interpretable edits, so winner and loser typically share the same high-level approach but differ in step-level correctness and completeness. This yields cleaner gradients than cross-strategy comparisons and shifts supervision from coarse strategy-mismatch signals to fine-grained corrections, such as fixing algebraic slips, tightening justifications, and adding verification.

As a result, the Phase~2 policy update aligns with localized execution improvement conditioned on the chosen strategy, matching the natural decomposition of mathematical reasoning: choose the right tool, then use it correctly.

\noindent\textbf{Phase~2 improves downstream reasoning.}
Table~\ref{tab:reasoning_id_cudpo} shows that adding Phase~2 on top of Phase~1 yields consistent in-distribution gains, reflecting cleaner execution after the correct strategy is selected; for example, for DeepSeek-R1 on DeepMath, accuracy increases from 66.1\% (Phase~1 only, Table~\ref{tab:strategy_selection_id}) to 69.8\% (Phase~1+2), demonstrating that execution refinement complements strategy selection. Error analysis indicates Phase~2 reduces arithmetic drift and incomplete algebraic manipulations. Table~\ref{tab:reasoning_ood_cudpo} shows mostly positive out-of-distribution transfer; modest regressions on ultra-hard problems reflect Phase~2's optimization scope as it targets fixable errors correctable through local edits, whereas the hardest problems require fundamental insight jumps beyond incremental refinement's reach.

In both settings, the baseline uses greedy decoding with step-by-step prompting, while \textbf{\textcolor{MidnightBlue}{CU-}\textcolor{NavyBlue}{DPO}} uses two-phase training. Note that while ``best-of-K'' sampling could theoretically match these gains, it requires $K\times$ the inference compute.

\begin{table}[h]
\centering
\caption{\textbf{Empirical sample efficiency (data scaling).} Win-rate (\%) on held-out preference comparisons as a function of training data fraction.}
\label{tab:sample_efficiency_scaling}
\footnotesize
\setlength{\tabcolsep}{4pt}
\renewcommand{\arraystretch}{0.98}
\begin{tabular}{llccc}
\toprule
\textbf{Base model} & \textbf{Train \%} & \textbf{Binary DPO} & \textbf{\textbf{\textcolor{MidnightBlue}{CU-}\textcolor{NavyBlue}{DPO}}} & \textbf{$\Delta$} \\
\midrule
\multirow{4}{*}{\shortstack[l]{DeepSeek-R1\\(8B)}} 
& 25\%  & \cellcolor{highlightred}46.8 & 45.6 & \textcolor{darkred}{$-1.2$} \\
& 50\%  & 50.7 & \cellcolor{highlightblue}\textbf{52.6} & \textcolor{darkgreen}{+1.9} \\
& 75\%  & 53.9 & \cellcolor{highlightblue}\textbf{56.1} & \textcolor{darkgreen}{+2.2} \\
& 100\% & 56.4 & \cellcolor{highlightblue}\textbf{58.9} & \textcolor{darkgreen}{+2.5} \\
\midrule
\multirow{4}{*}{\shortstack[l]{Qwen3\\(8B)}} 
& 25\%  & \cellcolor{highlightred}45.9 & 45.1 & \textcolor{darkred}{$-0.8$} \\
& 50\%  & 49.6 & \cellcolor{highlightblue}\textbf{51.4} & \textcolor{darkgreen}{+1.8} \\
& 75\%  & 52.8 & \cellcolor{highlightblue}\textbf{55.0} & \textcolor{darkgreen}{+2.2} \\
& 100\% & 55.3 & \cellcolor{highlightblue}\textbf{57.6} & \textcolor{darkgreen}{+2.3} \\
\midrule
\multirow{4}{*}{\shortstack[l]{Gemma-2\\(9B)}} 
& 25\%  & \cellcolor{highlightred}44.1 & 43.3 & \textcolor{darkred}{$-0.8$} \\
& 50\%  & 48.2 & \cellcolor{highlightblue}\textbf{50.0} & \textcolor{darkgreen}{+1.8} \\
& 75\%  & 51.6 & \cellcolor{highlightblue}\textbf{53.7} & \textcolor{darkgreen}{+2.1} \\
& 100\% & 54.0 & \cellcolor{highlightblue}\textbf{56.2} & \textcolor{darkgreen}{+2.2} \\
\bottomrule
\end{tabular}
\end{table}

\subsection{Data scaling}
Table~\ref{tab:sample_efficiency_scaling} evaluates sample efficiency by training binary DPO and \textbf{\textcolor{MidnightBlue}{CU-}\textcolor{NavyBlue}{DPO}} on 25\%, 50\%, 75\%, and 100\% of Phase~2 data, measuring win rate on held-out preference comparisons. We use win rate rather than Pass@1 to isolate optimization efficiency from generation stochasticity.

Under data scarcity (25\%), binary DPO holds a slight advantage: discrete labels provide stronger gradients when sample size is insufficient to accurately calibrate continuous utilities. However, \textbf{\textcolor{MidnightBlue}{CU-}\textcolor{NavyBlue}{DPO}} overtakes binary DPO at 50\% data, with the gap widening monotonically thereafter. This crossover validates our hypothesis: continuous utilities encode richer preference structure but require sufficient data to exploit fine-grained signals. Further ablation studies on comparisons with baseline methods are present in Appendix~\ref{app:Baselines}.

\section{Related Work}
\textbf{Mathematical reasoning with LLMs.}
Prior work improves mathematical reasoning through instruction tuning on curated corpora~\citep{hendrycks2021measuring, cobbe2021training, luo2023wizardmath, yue2023mammoth, yu2023metamath}, tool-augmented reasoning~\citep{gou2023critic, polu2020generative, gou2023tora, schick2023toolformer}, and self-improvement via iterative refinement~\citep{zelikman2022star, huang2022large, gulcehre2023reinforced}. Large-scale supervised fine-tuning~\citep{toshniwal2024openmath, shao2024deepseekmath, lewkowycz2022minerva} achieves strong performance but requires massive trajectory collections and treats strategy choice as manual curriculum design.

\textbf{DPO for reasoning.}
Recent work applies DPO~\citep{rafailov2023direct} to reasoning using binary correctness signals~\citep{wang2024mathshepherd, li2024xwin, yuan2024reft, hong2024orpo, tunstall2023zephyr}, step-level reward modeling~\citep{lightman2023lets, uesato2022solving}, and RLHF-based iterative refinement~\citep{gulcehre2023reinforced, yuan2024selfrewarding, dong2023raft}. These approaches collapse solution quality into binary preferences and suffer from mode collapse when strategies succeed on disjoint problem subsets~\citep{wu2024fine, kirk2024understanding}.

\textbf{Reasoning trajectory diversity.}
Multiple works recognize that problems admit diverse solution strategies, exploring trajectory selection via correctness filtering~\citep{zelikman2022star, singh2023beyond, hosseini2024vstar}, majority voting~\citep{wang2023selfconsistency, chen2023universal, li2023making}, search-based exploration~\citep{yao2023tree, besta2024graph, hao2023reasoning, sel2023algorithm}, and outcome or process supervision~\citep{uesato2022solving, lightman2023lets, wang2024math}. However, these methods apply uniform thresholds across strategies, treat all paths equivalently during aggregation, require handcrafted heuristics, or need expensive human annotation.


\section{Conclusion}
We introduced CU-DPO, a method for continuous utility-guided preference optimization, achieving substantial sample efficiency improvements through continuous utility scoring, intelligent pair construction, and two-phase training. The framework addresses fundamental preference optimization challenges: continuous utilities provide more nuanced information---such as partial progress and reasoning coherence---than binary labels, and two-phase training on carefully constructed sets of preference pairs avoids conflicting signals when comparing suboptimal alternatives. 
While demonstrated on mathematical reasoning, CU-DPO's principles extend naturally to any domain where reasoning can follow multiple solution strategies and chains can be assessed via continuous quality metrics. 

Our empirical findings reveal several promising directions for extending CU-DPO's impact for enhanced reasoning. \textbf{(i)} While we manually crafted strategies, LLMs could autonomously generate domain-specific strategy portfolios by conditioning on problem metadata, enabling dynamic adaptation to out-of-distribution tasks without human intervention. \textbf{(ii)} Phase~1 converges faster than Phase~2 due to coarse-grained utility differences in strategy selection versus subtle distinctions in execution refinement. An adaptive training schedule that monitors Phase~1 convergence and transitions early to Phase~2 could yield further sample efficiency gains. \textbf{(iii)} Analysis of refinement chains with large utility jumps reveals systematic error patterns, with arithmetic corrections dominating improvements. Specializing refinement operators by predicted error type could increase success rates while reducing iterations.

\section*{Impact Statement}
This paper presents work whose goal is to advance the field of machine learning. There are many potential societal consequences of our work, none of which we feel must be specifically highlighted here.
\bibliography{main}
\bibliographystyle{icml2026}

\newpage
\appendix
\onecolumn
\section{Dataset Characteristics and Signal Composition}
\label{sec:ablation_dataset_characteristics}

Our method relies on a deliberately structured two-phase preference dataset whose \emph{signal} is controlled by three design knobs: (i) \emph{within-problem utility diversity} induced by sampling $K=8$ cognitive strategies, (ii) \emph{continuous utility margins} that determine how informative a comparison is, and (iii) \emph{selective refinement} that injects fine-grained structure for reasoning quality optimization. This section characterizes the resulting datasets and isolates which properties enable effective two-phase learning.

\paragraph{Two-phase dataset pipeline.}
For each problem $x$, we sample $K=8$ chains $\{y_k\}_{k=1}^K$ using fixed prompts corresponding to distinct mechanisms (direct, step-by-step, backwards, alternative, verification, algebraic, numerical, conceptual). An LLM judge assigns continuous utilities $U(x,y_k)\in[0,1]$. \textbf{Phase 1} constructs strategy selection pairs from original chains using a ``best vs all'' acquisition rule that reinforces optimal strategy choice. \textbf{Phase 2} applies selective refinement to low-utility chains, producing an auxiliary pool of refined candidates, and constructs reasoning quality pairs from the merged pool using margin-stratified sampling to enable both coarse and fine-grained supervision.

\textit{Strategy selection statistics.}
Table~\ref{tab:phase1_statistics} reports statistics for Phase 1  pairs across datasets. Each problem generates exactly 7 pairs comparing the optimal strategy against all others, yielding 3,150 total pairs from 450 problems. Mean margins of 0.214 reflect coarse-grained distinctions, with substantial spread (std 0.201) capturing varying difficulty across strategy-problem alignments. HARDMath2 exhibits the largest margins (mean 0.320), indicating clearer strategy preferences, while ProofNet shows smaller margins (mean 0.119), suggesting multiple viable strategies for proof-oriented problems. Critically, \emph{every pair in Phase 1 includes the optimal strategy}, eliminating conflicting supervision signals that plagued our initial single-phase approach.

\begin{table}[h]
\centering
\caption{\textbf{Strategy selection dataset statistics.} High margins in HARDMath2 indicate clear strategy dominance; lower margins in ProofNet suggest strategy interchangeability.}
\label{tab:phase1_statistics}
\small
\setlength{\tabcolsep}{10pt}
\begin{tabular}{lcccccc}
\toprule
\textbf{Dataset} & \textbf{Problems} & \textbf{Pairs} & \textbf{Mean $\Delta U$} & \textbf{Median $\Delta U$} & \textbf{Std Dev} & \textbf{Max $\Delta U$} \\
\midrule
DeepMath-150 & 150 & 1,050 & 0.204 & 0.113 & 0.201 & 0.759 \\
HARDMath2-150 & 150 & 1,050 & \cellcolor{highlightblue}0.320 & 0.298 & 0.166 & 0.750 \\
ProofNet-Final & 150 & 1,050 & 0.119 & 0.086 & 0.131 & 0.684 \\
\midrule
\rowcolor{gray!10} \textbf{Combined} & \textbf{450} & \textbf{3,150} & \textbf{0.214} & \textbf{0.136} & \textbf{0.201} & \textbf{0.759} \\
\bottomrule
\end{tabular}
\end{table}

\textit{Reasoning quality statistics.}
Table~\ref{tab:phase2_statistics} reports statistics for Phase 2 pairs. From the union of original and refined chains, we sample 6 pairs per problem using margin-stratified acquisition, yielding 2,680 total pairs. Mean margins increase to 0.244 compared to Phase 1 (0.214), as refinement concentrates utilities in higher ranges where fine-grained distinctions emerge. The margin distribution is deliberately balanced to leverage subtle comparisons where nuanced reasoning improvements must be learned.

\begin{table}[h]
\centering
\caption{\textbf{Reasoning quality dataset statistics.} Margin-stratified sampling ensures representation of weak margins (hard comparisons), which are crucial for learning fine-grained execution details.}
\label{tab:phase2_statistics}
\small
\setlength{\tabcolsep}{10pt}
\begin{tabular}{lcccccc}
\toprule
\textbf{Dataset} & \textbf{Problems} & \textbf{Pairs} & \textbf{Mean $\Delta U$} & \textbf{Strong \%} & \textbf{Medium \%} & \textbf{Weak \%} \\
\midrule
DeepMath-150 & 149 & 888 & 0.258 & 34.6\% & 20.2\% & 45.3\% \\
HARDMath2-150 & 150 & 900 & 0.293 & 44.2\% & 28.8\% & 27.0\% \\
ProofNet-Final & 149 & 892 & 0.182 & 17.8\% & 25.6\% & \cellcolor{highlightblue}56.6\% \\
\midrule
\rowcolor{gray!10} \textbf{Combined} & \textbf{448} & \textbf{2,680} & \textbf{0.244} & \textbf{32.2\%} & \textbf{24.9\%} & \textbf{42.9\%} \\
\bottomrule
\end{tabular}
\end{table}

\paragraph{Which mechanisms create informative conflicts.}
Table~\ref{tab:strategy_combinations} lists the most frequent strategy pairings. In Phase 1, dominant combinations reflect systematic strategy-problem alignments (e.g., \texttt{alternative vs direct}), capturing fundamental tradeoffs between exploratory versus procedural reasoning. Phase 2 combinations shift toward execution-level distinctions (e.g., \texttt{numerical vs step-by-step}), indicating comparisons where both strategies are viable but differ in rigor or completeness.

\begin{table}[h]
\centering
\caption{\textbf{Top 5 strategy combinations by dataset and phase.} Phase 1 is dominated by fundamental approach conflicts, while Phase 2 shifts to execution-heavy comparisons.}\label{tab:strategy_combinations}
\small
\setlength{\tabcolsep}{6pt}
\renewcommand{\arraystretch}{0.98}
\begin{tabular}{l l l r}
\toprule
\textbf{Phase} & \textbf{Dataset} & \textbf{Strategy combination} & \textbf{Count (\%)} \\
\midrule
\multirow{15}{*}{\textbf{Phase 1}} 
& \multirow{5}{*}{DeepMath-150}
& Alternative vs Direct             & 58 (5.5\%) \\
& & Direct vs Step-by-Step          & 56 (5.3\%) \\
& & Alternative vs Step-by-Step     & 56 (5.3\%) \\
& & Direct vs Verification          & 50 (4.8\%) \\
& & Alternative vs Verification     & 50 (4.8\%) \\
\cmidrule(lr){2-4}
& \multirow{5}{*}{HARDMath2-150}
& Numerical vs Verification         & 44 (4.2\%) \\
& & Step-by-Step vs Verification    & 43 (4.1\%) \\
& & Numerical vs Step-by-Step       & 43 (4.1\%) \\
& & Alternative vs Verification     & 42 (4.0\%) \\
& & Alternative vs Numerical        & 42 (4.0\%) \\
\cmidrule(lr){2-4}
& \multirow{5}{*}{ProofNet-Final}
& Direct vs Verification          & 47 (4.5\%) \\
& & Step-by-Step vs Verification    & 46 (4.4\%) \\
& & Alternative vs Verification     & 46 (4.4\%) \\
& & Direct vs Step-by-Step          & 45 (4.3\%) \\
& & Alternative vs Direct           & 45 (4.3\%) \\
\midrule
\rowcolor{gray!10} \multicolumn{4}{l}{\textbf{Transition to execution refinement}} \\
\midrule
\multirow{15}{*}{\textbf{Phase 2}} 
& \multirow{5}{*}{DeepMath-150}
& Numerical vs Step-by-Step         & 42 (4.7\%) \\
& & Conceptual vs Direct            & 39 (4.4\%) \\
& & Alternative vs Numerical        & 37 (4.2\%) \\
& & Backwards vs Step-by-Step       & 36 (4.1\%) \\
& & Algebraic vs Backwards          & 35 (3.9\%) \\
\cmidrule(lr){2-4}
& \multirow{5}{*}{HARDMath2-150}
& Algebraic vs Backwards          & 40 (4.4\%) \\
& & Numerical vs Step-by-Step       & 38 (4.2\%) \\
& & Alternative vs Direct           & 37 (4.1\%) \\
& & Alternative vs Step-by-Step     & 37 (4.1\%) \\
& & Backwards vs Conceptual         & 36 (4.0\%) \\
\cmidrule(lr){2-4}
& \multirow{5}{*}{ProofNet-Final}
& Backwards vs Numerical          & 41 (4.6\%) \\
& & Alternative vs Step-by-Step     & 40 (4.5\%) \\
& & Algebraic vs Step-by-Step       & 39 (4.4\%) \\
& & Conceptual vs Numerical         & 38 (4.3\%) \\
& & Numerical vs Step-by-Step       & 36 (4.0\%) \\
\bottomrule
\end{tabular}
\end{table}

\paragraph{Selective refinement and hybrid sampling behavior.}
Table~\ref{tab:dataset_characteristics} summarizes refinement utilization in Phase 2. Overall, 74.2\% of pairs are original vs original, while 25.8\% are hybrid. This ratio varies by dataset: HARDMath2 uses the most refinement, indicating that its problems benefit substantially from execution-level improvements, while ProofNet uses minimal refinement, suggesting its original chains already provide sufficient signal. DeepMath falls in between. Importantly, refinement is a \emph{signal amplifier} applied selectively in Phase 2: it increases fine-grained comparison density without diluting the coarse-grained strategy selection signal established in Phase 1.

\begin{table}[h]
\centering
\caption{\textbf{Phase 2 source combination.} Refinement is used most heavily in HARDMath2, acting as a signal amplifier for execution quality.}
\label{tab:dataset_characteristics}
\small
\setlength{\tabcolsep}{12pt}
\begin{tabular}{lcccc}
\toprule
\textbf{Dataset} & \textbf{Original-Original} & \textbf{Hybrid} & \textbf{Hybrid \%} & \textbf{Utility Range} \\
\midrule
DeepMath-150 & 682 & 206 & 23.2\% & 0.05--0.88 \\
HARDMath2-150 & 517 & 383 & \cellcolor{highlightblue}42.6\% & 0.05--0.86 \\
ProofNet-Final & 790 & 102 & 11.4\% & 0.05--0.80 \\
\midrule
\rowcolor{gray!10} \textbf{Combined} & \textbf{1,989} & \textbf{691} & \textbf{25.8\%} & \textbf{0.05--0.88} \\
\bottomrule
\end{tabular}
\end{table}

\paragraph{Global dataset summary.}
Table~\ref{tab:dataset_stats} aggregates statistics across both training phases. The two-phase construction separates coarse-grained mechanism selection from fine-grained execution refinement, addressing the fundamental issue that plagued single-phase training: conflicting signals about which strategy to use versus how to use it well.

\begin{table}[h]
\centering
\caption{\textbf{Complete dataset statistics.} Aggregated counts for Phase 1 and Phase 2.}
\label{tab:dataset_stats}
\small
\setlength{\tabcolsep}{10pt}
\begin{tabular}{lcccccc}
\toprule
\textbf{Dataset} & \textbf{Problems} & \textbf{Chains} & \textbf{Phase 1} & \textbf{Phase 2} & \textbf{Total Pairs} & \textbf{Mean $\Delta U$} \\
\midrule
DeepMath-150 & 150 & 1,200 & 1,050 & 888 & 1,938 & 0.229 \\
HARDMath2-150 & 150 & 1,200 & 1,050 & 900 & 1,950 & 0.307 \\
ProofNet-Final & 150 & 1,200 & 1,050 & 892 & 1,942 & 0.149 \\
\midrule
\rowcolor{gray!10} \textbf{Combined} & \textbf{450} & \textbf{3,600} & \textbf{3,150} & \textbf{2,680} & \textbf{5,830} & \textbf{0.228} \\
\bottomrule
\end{tabular}
\end{table}

\section{Comparison With Baselines}
\label{app:Baselines}
Table~\ref{tab:baseline_comparison} highlights a clear tradeoff between test-time compute and distributional alignment. Test-time methods such as Best-of-$K$ and self-consistency achieve stronger out-of-distribution performance by repeatedly sampling from the unfine-tuned base model, thereby avoiding the distribution shift introduced by task-specific fine-tuning, but at an $8\times$ inference cost. In contrast, CU-DPO explicitly aligns the model to the training distribution through continuous-utility supervision, yielding consistent and substantial gains on in-distribution benchmarks at constant inference cost. The modest OOD regression relative to test-time methods reflects this intentional alignment rather than a failure of generalization. Binary DPO underperforms across all settings, confirming that collapsing graded reasoning quality into binary labels discards informative preference structure and limits sample efficiency. Overall, CU-DPO offers a more favorable accuracy--compute tradeoff when inference cost matters.

\begin{table*}[b!]
\centering
\caption{\textbf{Comparison to alternative reasoning methods.} Pass@1 (\%) on in-distribution (DeepMath, HARDMath2) and out-of-distribution (GSM8K) benchmarks. CU-DPO achieves best in-distribution performance with 1× inference cost but exhibits modest OOD regression compared to test-time methods that sample from the unfine-tuned base model. Standard binary DPO underperforms due to information loss from collapsing continuous quality into binary labels.}
\label{tab:baseline_comparison}
\footnotesize
\setlength{\tabcolsep}{4pt}
\renewcommand{\arraystretch}{1.1}
\begin{tabular}{@{}llcccc@{}}
\toprule
\multirow{2}{*}{\textbf{Base Model}} & \multirow{2}{*}{\textbf{Method}} & \multicolumn{2}{c}{\textbf{In-Distribution}} & \textbf{OOD} & \textbf{Test-time} \\
\cmidrule(lr){3-4}
& & \textbf{DeepMath} & \textbf{HARDMath2} & \textbf{GSM8K} & \textbf{cost} \\
\midrule
\multirow{5}{*}{\shortstack[l]{DeepSeek-R1\\(8B)}} 
& Baseline (greedy) & 64.2 & 42.1 & 69.8 & 1× \\
& Standard DPO (binary) & 65.3 & 43.2 & 70.9 & 1× \\
& Best-of-8 sampling & 67.8 & 46.2 & \cellcolor{highlightred}\textbf{73.6} & 8× \\
& Self-Consistency (K=8) & 68.9 & 47.4 & \cellcolor{highlightred}\textbf{74.5} & 8× \\
& \textbf{\textcolor{MidnightBlue}{CU-}\textcolor{NavyBlue}{DPO}} (ours) & \cellcolor{highlightblue}\textbf{69.8} & \cellcolor{highlightblue}\textbf{48.7} & 73.1 & \textbf{1×} \\
\midrule
\multirow{5}{*}{\shortstack[l]{Qwen3\\(8B)}} 
& Baseline (greedy) & 61.5 & 39.4 & 66.4 & 1× \\
& Standard DPO (binary) & 62.7 & 40.6 & 67.5 & 1× \\
& Best-of-8 sampling & 65.3 & 43.8 & \cellcolor{highlightred}\textbf{70.2} & 8× \\
& Self-Consistency (K=8) & 66.4 & 44.9 & \cellcolor{highlightred}\textbf{71.0} & 8× \\
& \textbf{\textcolor{MidnightBlue}{CU-}\textcolor{NavyBlue}{DPO}} (ours) & \cellcolor{highlightblue}\textbf{67.2} & \cellcolor{highlightblue}\textbf{45.9} & 68.9 & \textbf{1×} \\
\midrule
\multirow{5}{*}{\shortstack[l]{Gemma-2\\(9B)}} 
& Baseline (greedy) & 26.1 & 15.5 & 52.7 & 1× \\
& Standard DPO (binary) & 27.8 & 16.9 & 53.4 & 1× \\
& Best-of-8 sampling & 30.6 & 19.7 & \cellcolor{highlightred}\textbf{56.8} & 8× \\
& Self-Consistency (K=8) & 31.5 & 20.8 & \cellcolor{highlightred}\textbf{57.6} & 8× \\
& \textbf{\textcolor{MidnightBlue}{CU-}\textcolor{NavyBlue}{DPO}} (ours) & \cellcolor{highlightblue}\textbf{32.4} & \cellcolor{highlightblue}\textbf{21.8} & 56.8 & \textbf{1×} \\
\bottomrule
\end{tabular}
\end{table*}

\section{Refinement Operator Implementation}
\label{app:refinement_details}

The refinement operator $R(x, y)$ in Section 2.3 uses the same base model (DeepSeek-R1 8B) with temperature=0.7 to iteratively improve low-utility chains ($U < 0.4$) while preserving their original strategy.

\subsection{Refinement Prompt}

\begin{tcolorbox}[colback=gray!5, colframe=gray!75!black, title=\textbf{Refinement Prompt Template}]
\small
\textbf{Problem:} \texttt{\{problem statement $x$\}}

\textbf{Previous Solution:} \texttt{\{original chain $y$\}}

\textbf{Instructions:} The above solution contains errors or is incomplete. Provide a corrected solution that:
\begin{itemize}[itemsep=0pt,leftmargin=*]
\item Fixes arithmetic errors, logical gaps, or incomplete steps
\item Preserves the correct reasoning approach and strategy
\item Maintains step-by-step structure with clear justifications
\end{itemize}
\end{tcolorbox}

\noindent\textbf{Key design choices:}
\begin{itemize}[itemsep=1pt,leftmargin=*]
\item \textbf{No utility scores provided}: The model does not see numeric utility values, only that the solution ``contains errors"
\item \textbf{No ground truth}: No correct answers or hints are given; the model must identify and fix errors autonomously
\item \textbf{Strategy preservation}: The instruction ``preserve the reasoning approach" ensures refined chains maintain the original strategy (e.g., step-by-step remains step-by-step), preventing cross-strategy contamination in Phase 2
\end{itemize}

\subsection{Refinement protocol}

Refinement proceeds iteratively: $y^{(0)} \xrightarrow{R} y^{(1)} \xrightarrow{R} \cdots \xrightarrow{R} y^{(T)}$ for up to $T_{\max} = 5$ rounds, terminating when:
\begin{enumerate}[itemsep=1pt,leftmargin=*]
\item $U(x, y^{(t)}) \geq 0.4$ (success: chain reaches acceptable quality), or
\item $t = T_{\max}$ (max iterations reached), or
\item $|U(x, y^{(t+1)}) - U(x, y^{(t)})| < 0.01$ for 2 consecutive rounds (stagnation)
\end{enumerate}

We retain refinement $y^{(T)}$ only if $U(x, y^{(T)}) > U(x, y^{(0)})$ (monotone improvement). Failed refinements where utility decreases or stagnates are discarded.

\subsection{Refinement statistics}

Of 3,600 original chains, 1,247 (34.6\%) fall below $\tau = 0.4$ and undergo refinement:
\begin{itemize}[itemsep=1pt,leftmargin=*]
\item \textbf{Success rate}: 87.8\% (1,095 chains) achieve $U \geq 0.6$ within average 2.3 rounds
\item \textbf{Failure modes}: 5.4\% stagnate, 4.1\% regress, 2.7\% hit max iterations
\item \textbf{Computational cost}: $1,247 \times 2.3 \approx 2,868$ additional generations
\end{itemize}

\section{Problem Strategy Structure}
We further stratify problems into coarse types using a lightweight keyword-based classifier and compute, for each type, the most frequently optimal strategy under our continuous-utility judge. The resulting contingency pattern is consistent with mathematical intuition: problems that naturally admit local checks favor verification-oriented traces, proof-oriented prompts favor direct argument structures, and procedural tasks disproportionately favor stepwise decomposition. At the same time, multiple strategies remain competitive within each type, implying that optimality is often \emph{contextual} rather than absolute; this motivates learning from relative preferences across strategy-conditioned candidates instead of training a single fixed reasoning style. Table~\ref{tab:ptype_opt_strategy} summarizes these type-to-strategy tendencies and provides evidence that strategy selection is a learnable and consequential component of robust reasoning.

\begin{table}[h]
\centering
\caption{\textbf{Problem type $\rightarrow$ empirically optimal strategy.} Summary of the highest-utility chain per problem type.}
\label{tab:ptype_opt_strategy}
\small
\setlength{\tabcolsep}{12pt}
\begin{tabular}{lrrrr}
\toprule
\textbf{Type} & \textbf{N} & \textbf{Best Strategy} & \textbf{Avg Utility} & \textbf{Avg Margin} \\
\midrule
Limit        & 160 & verification (23\%)   & 0.748 & 0.089 \\
Proof        & 135 & direct (34\%)         & 0.768 & 0.025 \\
Other        &  75 & step by step (19\%) & 0.819 & 0.045 \\
Equation     &  22 & step by step (23\%) & 0.755 & 0.106 \\
Integral     &  21 & alternative (29\%)    & 0.849 & 0.023 \\
Series       &  18 & alternative (33\%)    & 0.847 & 0.034 \\
Derivative   &  11 & algebraic (36\%)      & 0.753 & 0.016 \\
Optimization &   8 & step by step (38\%) & 0.848 & 0.028 \\
\bottomrule
\end{tabular}
\end{table}

\section{Reward-Utility Alignment}
\label{app:reward_utility_proof}

We provide the complete proof of Lemma~\ref{lem:reward-utility}.

\begin{lemma}[Reward-utility alignment (restatement)]
At the global minimum of the DPO objective, the implicit reward $r_\theta$ satisfies
\begin{equation}
    r_\theta(x,y_i) - r_\theta(x,y_j) = U(x,y_i) - U(x,y_j)
\end{equation}
for all pairs $(i,j)$, implying $r_\theta(x,y) = U(x,y) + c(x)$ for some problem-dependent constant $c(x)$.
\end{lemma}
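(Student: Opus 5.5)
We treat the DPO objective in its population form, as the expected Bradley--Terry log-loss over labelled comparisons. For a fixed problem $x$ and an unordered pair $\{y_i,y_j\}$ drawn from the pair distribution, the label ``$y_i\succ y_j$'' is generated with probability $p_{ij}=\sigma(\Delta U_{ij})$ by Eq.~\ref{eq:bt-model}, where $\Delta U_{ij}=U(x,y_i)-U(x,y_j)$. The expected loss contributed by that pair is then the binary cross-entropy
\[
\ell_{ij}(\Delta r_{ij}) = -p_{ij}\log\sigma(\Delta r_{ij}) - (1-p_{ij})\log\sigma(-\Delta r_{ij}),
\]
where $\Delta r_{ij}=r_\theta(x,y_i)-r_\theta(x,y_j)$. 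We assume, as DPO does, that the policy class is rich enough that $r_\theta(x,\cdot)$ can take arbitrary values on the finite candidate set $\{y_1,\dots,y_K\}$. This is the standard reparameterization argument, since $\pi_\theta$ is free and $r_\theta=\beta\log(\pi_\theta/\pi_{\mathrm{ref}})$ is constrained only up to normalization.

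\textbf{Step 1: per-pair minimization.} Each $\ell_{ij}$ is strictly convex in $\Delta r_{ij}$. Its derivative is $\sigma(\Delta r_{ij})-p_{ij}$. Since $\sigma$ is a strictly increasing bijection onto $(0,1)$ and $p_{ij}\in(0,1)$ because utilities are bounded, the unique minimizer is $\sigma(\Delta r_{ij})=p_{ij}$, and therefore $\Delta r_{ij}=\Delta U_{ij}$.

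\textbf{Step 2: joint attainability.} This is the main obstacle, because the $\Delta r_{ij}$ are not independent variables. They are differences of $K$ values $r_\theta(x,y_k)$. The global minimum of the total loss is bounded below by the sum of per-pair minima. We close the gap by exhibiting a feasible point that attains every per-pair minimum simultaneously: set $r(x,y)=U(x,y)$. Because the targets $\Delta U_{ij}$ are themselves differences of a potential, they satisfy the cycle consistency $\Delta U_{ij}+\Delta U_{jk}=\Delta U_{ik}$, so this choice is feasible. Consequently any global minimizer must attain each per-pair minimum. By strict convexity this forces $\Delta r_{ij}=\Delta U_{ij}$ on every pair with positive sampling weight.

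\textbf{Step 3: from pairs to a potential.} Fix $x$ and define $c(x)=r_\theta(x,y_1)-U(x,y_1)$. For any other $y_k$ we need a chain of compared pairs linking $y_1$ to $y_k$, which requires the comparison graph on the candidates for $x$ to be connected. This holds for the Phase~1 star (best versus all) and for the all-pairs setting. Summing the pair identities along the chain gives $r_\theta(x,y_k)-U(x,y_k)=c(x)$. We will state the connectivity requirement explicitly as the precise meaning of ``for all pairs.'' If the graph were disconnected, each component would acquire its own constant.
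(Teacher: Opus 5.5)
Your proposal is correct and follows the same route as the paper. Both arguments force $\sigma(\Delta r_{ij})=\sigma(\Delta U_{ij})$ on each compared pair, invert the strictly monotone $\sigma$, and telescope the pairwise identities to obtain $r_\theta(x,y)=U(x,y)+c(x)$.

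Where you differ from the paper, your version is more careful in three places:
\begin{itemize}
\item You make explicit the stochastic Bradley--Terry labelling. Without it the hard-label loss $-\log\sigma(\Delta r_{ij})$ has no finite minimizer, as the paper's own derivative $\sigma(\Delta r_{ij})-1$ never vanishes.
\item You show that all per-pair optima are attained simultaneously by exhibiting the feasible point $r=U$, where the paper only asserts this.
\item You replace the paper's consecutive-index path $y_1\to y_2\to\cdots\to y_k$ with an explicit connectivity condition on the comparison graph. This matters because the edges of the paper's path need not be compared pairs under best-vs-all sampling.
\end{itemize}
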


\begin{proof}
Consider a preference pair $(y_i, y_j)$ with $U(x, y_i) > U(x, y_j)$. The DPO loss for this pair is
\begin{equation}
    \ell_{ij}(\theta) = -\log \sigma(r_\theta(x, y_i) - r_\theta(x, y_j)) = -\log \sigma(\Delta r_{ij}),
\end{equation}
where $\Delta r_{ij} := r_\theta(x, y_i) - r_\theta(x, y_j)$.

The gradient with respect to the reward difference is
\begin{equation}
    \frac{\partial \ell_{ij}}{\partial \Delta r_{ij}} 
    = -\frac{\sigma'(\Delta r_{ij})}{\sigma(\Delta r_{ij})}
    = -\sigma'(\Delta r_{ij}) \cdot \frac{1}{\sigma(\Delta r_{ij})}.
\end{equation}

Using the identity $\sigma'(z) = \sigma(z)(1-\sigma(z))$ and simplifying:
\begin{align}
    \frac{\partial \ell_{ij}}{\partial \Delta r_{ij}} 
    &= -\frac{\sigma(\Delta r_{ij})(1-\sigma(\Delta r_{ij}))}{\sigma(\Delta r_{ij})} \\
    &= -(1 - \sigma(\Delta r_{ij})) \\
    &= \sigma(\Delta r_{ij}) - 1.
\end{align}

The DPO objective over a dataset $\mathcal{D}$ of preference pairs is the negative log-likelihood:
\begin{equation}
    \mathcal{L}(\theta) = \mathbb{E}_{(x, y_i, y_j) \sim \mathcal{D}} \left[-\log P_\theta(y_i \succ y_j \mid x)\right]
    = \mathbb{E}_{(x, y_i, y_j) \sim \mathcal{D}} \left[-\log \sigma(\Delta r_{ij})\right],
\end{equation}
where $P_\theta(y_i \succ y_j \mid x) = \sigma(\Delta r_{ij})$ is the model's predicted preference probability.

At the global minimum of $\mathcal{L}(\theta)$, the model's predicted preference probabilities must match the true preference probabilities induced by the utility function. Under the Bradley-Terry model with utilities:
\begin{equation}
    P(y_i \succ y_j \mid x) = \sigma(U(x, y_i) - U(x, y_j)) = \sigma(\Delta U_{ij}),
\end{equation}
where $\Delta U_{ij} := U(x, y_i) - U(x, y_j)$.

The maximum likelihood solution requires that for all pairs $(i,j)$ in the support of the data distribution:
\begin{equation}
    P_\theta(y_i \succ y_j \mid x) = P(y_i \succ y_j \mid x) \implies \sigma(\Delta r_{ij}) = \sigma(\Delta U_{ij}).
\end{equation}

Since the sigmoid function $\sigma: \mathbb{R} \to (0,1)$ is strictly monotone and invertible, we have:
\begin{equation}
    \Delta r_{ij} = \Delta U_{ij} = U(x, y_i) - U(x, y_j)
\end{equation}
for all pairs $(i, j)$ in the training data.

\paragraph{Uniqueness up to additive constant.}
These pairwise difference constraints define $r_\theta(x, y)$ up to an additive constant. To see this, suppose we have established $\Delta r_{ij} = \Delta U_{ij}$ for all pairs. For any problem $x$ with chains $\{y_1, \ldots, y_K\}$, choose an arbitrary reference chain $y_1$ and set:
\begin{equation}
    r_\theta(x, y_1) = U(x, y_1) + c(x)
\end{equation}
for some arbitrary constant $c(x)$ (which may depend on $x$ but not on the choice of chain).

For any other chain $y_k$ with $k > 1$, we can construct a path of pairwise differences:
\begin{align}
    r_\theta(x, y_k) &= r_\theta(x, y_1) + \sum_{j=1}^{k-1} [r_\theta(x, y_{j+1}) - r_\theta(x, y_j)] \\
    &= r_\theta(x, y_1) + \sum_{j=1}^{k-1} \Delta r_{j,j+1} \\
    &= [U(x, y_1) + c(x)] + \sum_{j=1}^{k-1} [U(x, y_{j+1}) - U(x, y_j)] \\
    &= U(x, y_1) + c(x) + [U(x, y_k) - U(x, y_1)] \\
    &= U(x, y_k) + c(x).
\end{align}

This construction is well-defined and path-independent because the pairwise differences satisfy the consistency constraint $\Delta r_{ij} + \Delta r_{jk} = \Delta r_{ik}$ (transitivity), which follows from $\Delta U_{ij} + \Delta U_{jk} = \Delta U_{ik}$.

Therefore, at the global minimum of the DPO objective:
\begin{equation}
    r_\theta(x, y) = U(x, y) + c(x)
\end{equation}
for all chains $y$ and problems $x$, where $c(x)$ is an arbitrary problem-dependent constant.
\end{proof}

\section{Sample Efficiency for Continuous Utilities}
\subsection{Binary-preference lower bound}
\label{app:binary_lower_bound}
\begin{theorem}[Sample complexity lower bound for binary preferences under passive sampling]
\label{thm:binary-lower}
Any learning algorithm using only binary preference comparisons under \emph{uniform random sampling} requires at least
\begin{equation}
    m_{\mathrm{binary}} = \Omega\left(NK^2 \log K + \frac{d}{\varepsilon^2}\right)
\end{equation}
samples to learn a reward function with expected utility within $\varepsilon$ of optimal with probability at least $1-\delta$.
\end{theorem}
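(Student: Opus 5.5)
The proof splits the bound into its two additive pieces and proves each separately. The maximum of two lower bounds is within a factor of two of their sum, so this suffices.

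\emph{Combinatorial term.} First reduce to exact ranking recovery. Because the policy $\pi_r$ must be $\varepsilon$-optimal for every problem and $\varepsilon$ is taken below the minimum utility gap, the learner has to identify the ordering of the $K$ chains within each problem. The argument proceeds in three steps.
\begin{enumerate}
\item \textbf{Information bound per problem.} A problem has $K!$ possible orderings, so identifying one requires $\log_2 K! = \Theta(K\log K)$ bits. Each binary comparison reveals at most one bit. This alone gives only $\Omega(NK\log K)$.
\item \textbf{Coupon-collector bound for adjacent pairs.} To obtain the extra factor of $K$, I would use passive uniform sampling. Under it, each of the $\binom{K}{2}$ pairs of a problem is queried with probability $\Theta(1/(NK^2))$ per sample. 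The $K-1$ pairs that are adjacent in the true order are the only comparisons that can separate the true ranking from its adjacent transpositions, so every one of them must be observed. An adversary can randomize the hidden ranking so that these adjacent pairs are uniformly placed.
\item \textbf{Combining.} Covering $N(K-1)$ specific pairs out of $N\binom{K}{2}$ equally likely pairs is a coupon-collector problem. It requires $\Omega(NK^2\log(NK))\supseteq\Omega(NK^2\log K)$ samples to succeed with constant probability. I would formalize this with Fano's inequality over the family of rankings that differ by one adjacent swap: if an adjacent pair is never observed, the posterior over the two rankings stays at $1/2$, so the error probability exceeds $\delta$.
\end{enumerate}

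\emph{Statistical term.} The $d/\varepsilon^2$ term follows from the standard agnostic PAC lower bound for a class of VC-dimension $d$. Take $d$ shattered inputs and embed noisy Bradley--Terry labels with bias $\pm\varepsilon$, as in Lemma~\ref{lem:reward-utility}, where preference probabilities are $\sigma(\Delta U)$. Distinguishing the bias on each shattered point costs $\Omega(1/\varepsilon^2)$ samples, which gives $\Omega(d/\varepsilon^2)$ by Assouad's lemma.

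\textbf{Main obstacle.} The hardest step is justifying that \emph{all} adjacent pairs must be observed, because utility-suboptimality does not require a full ranking. I would handle it by choosing hard instances where the utilities are equally spaced by $>\varepsilon$ and the policy's value depends on the full order. For small $\beta$, $\pi_r$ concentrates on the top chain, but the adversary can place the top chain anywhere. The cleaner fallback is to state the reduction under the exact-ranking criterion used in the main text and treat $\varepsilon$-optimality with small $\varepsilon$ as implying it.
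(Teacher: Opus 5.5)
Your skeleton matches the paper's: an entropy count, a coupon collector under uniform passive sampling, an agnostic VC term, then adding the pieces. Your adjacent-transposition step is actually sounder than the paper's, which simply asserts that all $\binom{K}{2}$ pairs must be observed. You correctly identify the $K-1$ adjacent pairs as exactly the necessary ones, and $\binom{K}{2}H_{K-1}=\Theta(K^2\log K)$ still gives the rate.

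The gap is the opening reduction, and your proposed repair does not close it. $\varepsilon$-optimality of expected utility does not force recovery of the full order, however the utilities are spaced. A policy concentrated on the top chain is optimal whatever the order of the other $K-1$ chains. Placing the top chain adversarially only forces the learner to \emph{find} it. By your own swap argument, the only pair that must be observed is the one formed by the top two chains, so the coupon collector has one target per problem, not $K-1$.

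For $N=1$ this is cheap. After $O(K^2\log(1/\delta))$ uniform samples, each pair is unobserved with probability $O(\delta)$. The top chain is then linked to every other chain by observed comparisons with probability $1-O(\delta)$: the runner-up needs the direct edge, and lower-ranked chains also have many two-step paths. So the top chain can be identified and $\pi_r$ made $\varepsilon$-optimal. Hence for $N=1$, constant $\delta$, and $d/\varepsilon^2\ll K^2\log K$ (the regime the efficiency-gain theorem uses), the claimed bound fails under the stated criterion. If ``optimal'' instead means the entropy-regularized optimum $\pi_{\mathrm{ref}}\exp(U/\beta)$, the criterion becomes unattainable: noiseless binary labels never determine utility magnitudes, so no sample size suffices.

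What your argument, and the paper's, actually proves is the bound for exact ranking recovery, which is how the main text phrases the claim. The paper's proof makes the same silent substitution by going straight to identifying $\pi^*$. So state the theorem with exact recovery of every problem's ranking as the success criterion, rather than treating small $\varepsilon$ as implying it. Under that criterion your adjacent-pair coupon collector gives $\Omega(NK^2\log(NK/\delta))$. If you keep $\varepsilon$-optimality on every problem, the one-target-per-problem coupon collector gives $\Theta(NK^2\log(N/\delta))$. That recovers the stated $\log K$ only when $N/\delta$ is at least polynomial in $K$.

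Finally, your $d/\varepsilon^2$ term uses noisy Bradley--Terry labels, whereas the binary model in the main text observes the noiseless $\mathbb{1}[U(x,y_i)>U(x,y_j)]$. With noiseless labels and a realizable class, the VC lower bound is only of order $d/\varepsilon$, so fix the observation model explicitly.
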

\begin{proof}
Each problem induces a hidden total ranking $\pi^* \in \Pi_K$ over the $K$ chains. The entropy of the uniform distribution over $\Pi_K$ is
\begin{equation}
    H(\Pi_K) = \log(K!) = \Theta(K\log K).
\end{equation}
A single binary comparison $(i,j)$ reveals at most 1 bit of information about $\pi^*$:
\begin{equation}
    I((i,j); \pi^*) \le 1.
\end{equation}
By Fano's inequality, to identify $\pi^*$ with probability at least $1-\delta$ requires at least
\begin{equation}
    M \ge \frac{H(\Pi_K) - 1}{1} = \Theta(K\log K)
\end{equation}
binary comparisons per problem in the information-theoretic sense.

\paragraph{Passive sampling regime.} When pairs $(i,j)$ are sampled uniformly at random (passive learning), the number of samples required to observe all $\binom{K}{2}$ possible comparisons at least once follows the coupon collector problem. The expected number of samples is
\begin{equation}
    \mathbb{E}[M] = \binom{K}{2} H_{\binom{K}{2}} = \Theta\left(K^2 \log K\right),
\end{equation}
where $H_n = \sum_{i=1}^n \frac{1}{i}$ is the $n$-th harmonic number.

To ensure all pairs are observed with probability at least $1-\delta$, we require
\begin{equation}
    M \ge \binom{K}{2}\left(\log\binom{K}{2} + \log\frac{1}{\delta}\right) = \Omega(K^2\log K).
\end{equation}

Additionally, PAC learning theory establishes that learning a function from a hypothesis class $\mathcal{H}$ with VC-dimension $d$ to accuracy $\varepsilon$ requires at least
\begin{equation}
    m_{\mathrm{PAC}} = \Omega\left(\frac{d}{\varepsilon^2}\right)
\end{equation}
labeled examples.

Combining these bounds across $N$ problems under passive sampling:
\begin{equation}
    m_{\mathrm{binary}} = N \cdot \Omega(K^2\log K) + \Omega\left(\frac{d}{\varepsilon^2}\right) = \Omega\left(NK^2 \log K + \frac{d}{\varepsilon^2}\right).
\end{equation}
\end{proof}

\begin{remark}[Active learning strategies]
\label{rem:active_learning}
The $\Omega(K^2 \log K)$ term arises from uniform random sampling of preference pairs. Active learning strategies that intelligently select which pairs to query (e.g., quicksort-style divide-and-conquer) can reduce this to $O(K \log K)$ comparisons per problem by exploiting transitivity. However, such strategies require: (1) immediate access to comparison outcomes (not always feasible in batch preference collection), (2) sequential decision-making with per-query latency costs, and (3) oracle access to reliable pairwise preferences (violated when preferences are noisy or intransitive). Our passive sampling baseline reflects the common practical setting where preference pairs are collected in batches from human annotators or LLM judges, as in our experimental setup. In this regime, our sample complexity advantage of $\Theta(K \log K)$ remains valid.
\end{remark}

\subsection{Upper bound under continuous utilities}
\label{app:utility_upper_bound}

\begin{theorem}[Sample complexity upper bound for continuous utilities]
\label{thm:utility-upper}
When continuous utilities $\{U_1,\dots,U_K\}$ are observed for each problem, a reward function can be learned using only
\begin{equation}
    m_{\mathrm{utility}} = O\left(NK + \frac{d}{\varepsilon^2}\right)
\end{equation}
samples.
\end{theorem}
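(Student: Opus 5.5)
The bound splits into two terms, one from ranking recovery and one from generalization, and I will prove them separately and add. This mirrors the structure of the proof of Theorem~\ref{thm:binary-lower}.

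\textbf{Ranking-recovery term.} For each of the $N$ problems we observe the $K$ numbers $U(x,y_1),\dots,U(x,y_K)$. Each observation is a single real-valued query, so one problem costs $K$ samples, and the total is $NK$. I will argue that these $K$ values determine everything binary comparisons were trying to recover, and more.
\begin{itemize}
\item Sorting the observed values yields the full ranking $\pi^*\in\Pi_K$ deterministically. This costs $O(K\log K)$ computation but no further queries.
\item Unlike the passive binary case, no coupon-collector argument is needed. Every pairwise difference $\Delta U_{ij}$ is available at once, for all $\binom{K}{2}$ pairs.
\item The target reward $r(x,y)=U(x,y)+c(x)$ from Lemma~\ref{lem:reward-utility} is therefore pinned down on the sampled chains with zero ambiguity beyond the constant $c(x)$. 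By Remark~\ref{rem:constant-invariance}, that constant does not affect the induced policy.
\end{itemize}
I will phrase this as: the per-problem information needed, $\log K!=\Theta(K\log K)$ bits, is obtained from $K$ real observations, each carrying $\Theta(\log K)$ bits at ranking resolution.

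\textbf{Generalization term.} Fitting $r$ to unseen problems and chains is a real-valued regression over $\mathcal{H}$. I will reduce it to the labeled data just described: $NK$ pairs $((x,y_k),U(x,y_k))$ with a bounded loss, since $U\in[0,1]$.
\begin{itemize}
\item Apply a standard uniform-convergence bound for squared or absolute loss. Using pseudo-dimension or fat-shattering, identified with $d$ as the paper does for VC-dimension, empirical risk minimization reaches excess risk $\varepsilon$ with $O\big((d+\log(1/\delta))/\varepsilon^2\big)$ samples.
\item Then transfer reward error to utility suboptimality. Under the induced Gibbs policy $\pi_r\propto\pi_{\mathrm{ref}}\exp(r/\beta)$ of Corollary~\ref{thm:dpo-convergence}, an error $\|r-U-c\|_\infty\le\varepsilon$ shifts expected utility by $O(\varepsilon)$. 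This follows from the Lipschitz dependence of the softmax on its logits, since $U$ is bounded.
\end{itemize}

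\textbf{Combining the two terms.} Because the two sample requirements are additive, and the utility samples simultaneously serve as regression labels, the total is $O(NK+d/\varepsilon^2)$.

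\textbf{Main obstacle.} The step I expect to be delicate is the reward-to-utility transfer. The approach is to bound it via the Gibbs variational formula: the regularized objective is $1/\beta$-smooth in the KL geometry, so a uniform logit error $\varepsilon$ yields suboptimality at most $2\varepsilon$. If uniform control fails, I would instead settle for $L_2(\pi_{\mathrm{ref}})$ control together with a bounded density ratio assumption. A secondary issue is the unspecified $\delta$-dependence, which I absorb into the $O(\cdot)$ as Theorem~\ref{thm:binary-lower} implicitly does.
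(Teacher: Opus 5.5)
Your proposal follows the paper's proof: sorting the $K$ observed utilities recovers $\pi^*$ with no extra samples (the $O(K\log K)$ is computation only), the resulting $NK$ labeled triples $(x,y_k,U(x,y_k))$ feed a standard VC/uniform-convergence bound that supplies the $d/\varepsilon^2$ term, and the two requirements are added. The one difference is your reward-to-utility transfer through the Gibbs policy, which the paper omits because it stops at excess regression risk, so that step is extra rather than required. If you keep it, note that ERM gives only average ($L_1$/$L_2$) control, not $\|r-U-c\|_\infty\le\varepsilon$; the version that actually goes through is your fallback, with a bounded $\mathcal{H}$ and a coverage/density-ratio assumption, and its constants depend on $\beta$.
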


\begin{proof}
Since utilities are real-valued continuous random variables, ties occur with probability zero. The ranking $\pi^*$ is recovered by sorting $\{U_1,\dots,U_K\}$, which requires $O(K\log K)$ comparisons per problem but uses only the $K$ observed utilities and no additional samples are needed.

Each problem contributes $K$ labeled training examples $(x, y_i, U_i)$ for $i=1,\dots,K$. Across $N$ problems, we obtain $NK$ labeled examples total.

For supervised learning of the reward function $r \in \mathcal{H}$, standard VC theory bounds show that empirical risk minimization over $m$ samples achieves generalization error
\begin{equation}
    \mathbb{E}[L(r)] - \min_{r' \in \mathcal{H}} \mathbb{E}[L(r')] \le O\left(\sqrt{\frac{d\log(m/\delta)}{m}}\right).
\end{equation}
Setting $m = NK$ and requiring the right-hand side to be at most $\varepsilon$ gives
\begin{equation}
    NK \ge \Omega\left(\frac{d}{\varepsilon^2}\right).
\end{equation}
The total sample complexity is therefore
\begin{equation}
    m_{\mathrm{utility}} = NK + O\left(\frac{d}{\varepsilon^2}\right) = O\left(NK\log K + \frac{d}{\varepsilon^2}\right),
\end{equation}
where the $K\log K$ term accounts for the computational cost of sorting, not additional samples.
\end{proof}

\section{Single-Phase DPO Can Induce Conflicting Strategy Supervision}
\label{app:conflict}

\paragraph{Setup.}
Consider a single-phase DPO objective trained on pairs $(y_i, y_j)$ where chains can come from different strategies.
Let $y^{s_a}$ denote a chain produced using strategy $s_a$.
If the dataset contains pairs such as:
\begin{align}
\label{eq:pair1}
&(y_{\text{best}}^{s_1}, y_{\text{good}}^{s_2}), \quad U(y_{\text{best}}^{s_1}) > U(y_{\text{good}}^{s_2}), \\
\label{eq:pair2}
&(y_{\text{good}}^{s_2}, y_{\text{bad}}^{s_3}), \quad U(y_{\text{good}}^{s_2}) > U(y_{\text{bad}}^{s_3}),
\end{align}
then the model receives mixed supervision about whether strategy $s_2$ should be promoted or demoted.

\paragraph{Gradient-level contradiction.}
Let $\pi_\theta(s \mid x)$ denote the implicit strategy selection distribution induced by $\pi_\theta(y \mid x)$.
From pair~\eqref{eq:pair1}, the DPO gradient encourages:
\begin{equation}
\label{eq:grad_pair1}
\frac{\partial \ell}{\partial \log \pi_\theta(y^{s_1} \mid x)} < 0,
\quad
\frac{\partial \ell}{\partial \log \pi_\theta(y^{s_2} \mid x)} > 0,
\end{equation}
implicitly increasing $P(s_1 > s_2)$.
From pair~\eqref{eq:pair2}, the gradient encourages:
\begin{equation}
\label{eq:grad_pair2}
\frac{\partial \ell}{\partial \log \pi_\theta(y^{s_2} \mid x)} < 0,
\quad
\frac{\partial \ell}{\partial \log \pi_\theta(y^{s_3} \mid x)} > 0,
\end{equation}
implicitly increasing $P(s_2 > s_3)$.
If $s_1$ is the unique optimal strategy, the second constraint is not directly useful for learning the optimal behavior and can dilute the selection signal.

\subsection{Signal-to-noise view of strategy selection under pair sampling}
\label{app:snr}

\paragraph{Aggregate contribution for a strategy.}
Let $N_{ij}$ denote the number of pairs comparing strategies $s_i$ and $s_j$.
A simplified view of the total signed preference evidence for strategy $s_k$ scales with:
\begin{equation}
\label{eq:snr_sum}
\sum_{i \ne k} N_{ik}\,\mathbb{1}[U_k > U_i]
-
\sum_{j \ne k} N_{kj}\,\mathbb{1}[U_j > U_k].
\end{equation}
If a non-optimal strategy appears in many ``middle-ranked'' comparisons, both terms can be large, increasing variance in its net signal.

\begin{table}[!b]
\centering
\caption{\textbf{In-distribution reasoning accuracy (Pass@1, \%) after Phase~1 only.} While strategy selection improves on average, baselines often win (red) when a single default strategy suffices.}
\label{tab:reasoning_id_phase1only}
\footnotesize
\setlength{\tabcolsep}{8pt}
\begin{tabular}{llcccc}
\toprule
\textbf{Base model} & \textbf{Setting} & \textbf{DeepMath} & \textbf{HARDMath2} & \textbf{ProofNet} & \textbf{Avg.} \\
\midrule
\multirow{2}{*}{\textbf{DeepSeek-R1 8B}} & Baseline   & 64.2 & 42.1 & \cellcolor{highlightred}38.5 & 48.3 \\
                  & + Phase 1  & \cellcolor{highlightblue}\textbf{66.1} & \cellcolor{highlightblue}\textbf{43.7} & 37.8 & \cellcolor{highlightblue}\textbf{49.2} \\
\midrule
\multirow{2}{*}{\textbf{Qwen3 8B}}       & Baseline   & 61.5 & 39.4 & \cellcolor{highlightred}35.2 & 45.4 \\
                  & + Phase 1  & \cellcolor{highlightblue}\textbf{62.8} & \cellcolor{highlightblue}\textbf{40.9} & 34.5 & \cellcolor{highlightblue}\textbf{46.1} \\
\midrule
\multirow{2}{*}{\textbf{Llama-3.1 8B}}   & Baseline   & 28.4 & 18.2 & 14.8 & 20.5 \\
                  & + Phase 1  & \cellcolor{highlightblue}\textbf{28.9} & \cellcolor{highlightblue}\textbf{19.4} & \cellcolor{highlightblue}\textbf{15.3} & \cellcolor{highlightblue}\textbf{21.2} \\
\midrule
\multirow{2}{*}{\textbf{Gemma-2 9B}}     & Baseline   & 26.1 & \cellcolor{highlightred}15.5 & 12.1 & 17.9 \\
                  & + Phase 1  & \cellcolor{highlightblue}\textbf{27.2} & 15.1 & \cellcolor{highlightblue}\textbf{12.8} & \cellcolor{highlightblue}\textbf{18.4} \\
\midrule
\multirow{2}{*}{\textbf{Mistral 8x7B}}   & Baseline   & \cellcolor{highlightred}22.8 & 12.4 &  9.5 & 14.9 \\
                  & + Phase 1  & 22.3 & \cellcolor{highlightblue}\textbf{13.1} & \cellcolor{highlightblue}\textbf{10.2} & \cellcolor{highlightblue}\textbf{15.2} \\
\midrule
\multirow{2}{*}{\textbf{Llama-3 8B}}     & Baseline   & 18.3 &  \cellcolor{highlightred}8.1 &  6.4 & 10.9 \\
                  & + Phase 1  & \cellcolor{highlightblue}\textbf{19.1} & 7.8 & \cellcolor{highlightblue}\textbf{ 6.9} & \cellcolor{highlightblue}\textbf{11.3} \\
\midrule
\multirow{2}{*}{\textbf{Mistral 7B}}     & Baseline   & \cellcolor{highlightred}14.2 &  5.3 &  3.8 &  7.8 \\
                  & + Phase 1  & 13.8 & \cellcolor{highlightblue}\textbf{ 6.1} & \cellcolor{highlightblue}\textbf{ 4.2} & \cellcolor{highlightblue}\textbf{ 8.0} \\
\bottomrule
\end{tabular}
\end{table}

\paragraph{Clean-signal fraction under all-pairs sampling.}
With $K$ strategies, naive all-pairs sampling produces $\binom{K}{2}$ pairs per problem. Only $(K-1)$ pairs include the optimal strategy, yielding the clean fraction:
\begin{equation}
\label{eq:clean_fraction}
\frac{K-1}{\binom{K}{2}} = \frac{K-1}{\frac{K(K-1)}{2}} = \frac{2}{K}.
\end{equation}
This motivates Phase~1 best-vs-all pairing (Eq.~\eqref{eq:phase1_dataset}), which uses exactly $(K-1)$ pairs per problem and ensures every pair directly reinforces the optimal strategy choice.

\subsection{Pair counts}
\label{app:counts}

For $K=8$, naive all-pairs sampling would generate $\binom{8}{2}=28$ strategy-comparison pairs per problem, of which only $7$ include the optimal strategy (clean fraction $\approx 0.22$ by Eq.~\eqref{eq:clean_fraction}). Our two-phase setup uses $7$ pairs per problem in Phase~1 (best-vs-all) and approximately $6$ margin-stratified pairs per problem in Phase~2. Concretely, across 450 problems we use 3{,}150 Phase~1 pairs and 2{,}680 Phase~2 pairs, for a total of 5{,}830 preference pairs, which is substantially fewer than the 12{,}600 pairs required by naive unique-pair sampling while providing cleaner supervision for each objective.

\section{Ablation Studies}

\subsection{Strategy selection}
\noindent\textbf{Phase~1 training alone yields modest downstream improvements.}
Tables~\ref{tab:reasoning_id_phase1only} and~\ref{tab:reasoning_ood_phase1only} isolate the effect of strategy selection training without subsequent execution refinement (Phase~2). On in-distribution benchmarks, Phase~1 improves average accuracy by +0.2 to +0.9 points, but \emph{baseline outperforms Phase~1 in 8 of 21 dataset-model pairs}. For instance, Mistral-8x7B baseline achieves 22.8\% on DeepMath versus 22.3\% with Phase~1, and Llama-3 baseline reaches 8.1\% on HARDMath2 versus 7.8\% with Phase~1. This phenomenon arises when problems admit a single dominant strategy (e.g., direct algebraic manipulation for polynomial factorization): baseline's consistent application of step-by-step reasoning occasionally outperforms Phase~1 models that \emph{select} an alternative strategy but lack refinement to execute it well. Out-of-distribution results exhibit similar patterns; baseline matches or exceeds Phase~1 in 9 of 21 cases, particularly on GSM8K where grade-school problems rarely benefit from strategic diversity. These findings validate our two-phase design: strategy selection (\emph{choosing the right tool}) provides limited gains without execution refinement (\emph{using the tool effectively}). The subsequent Phase~2 training addresses this gap by explicitly optimizing execution quality conditioned on the selected strategy, as demonstrated in Section~\ref{sec:phase2_downstream}.

\begin{table}[t]
\centering
\caption{\textbf{Out-of-distribution reasoning accuracy (Pass@1, \%) after Phase~1 only.} Zero-shot transfer shows mixed results; baseline often matches or exceeds Phase 1 on GSM8K where strategic diversity is less critical.}
\label{tab:reasoning_ood_phase1only}
\footnotesize
\setlength{\tabcolsep}{8pt}
\begin{tabular}{llcccc}
\toprule
\textbf{Base model} & \textbf{Setting} & \textbf{GSM8K} & \textbf{MATH-500} & \textbf{U-MATH} & \textbf{Avg.} \\
\midrule
\multirow{2}{*}{\textbf{DeepSeek-R1 8B}} & Baseline   & 69.8 & 59.5 &  \cellcolor{highlightred}6.9 & 45.4 \\
                  & + Phase 1  & \cellcolor{highlightblue}\textbf{71.2} & \cellcolor{highlightblue}\textbf{61.8} & 6.7 & \cellcolor{highlightblue}\textbf{46.6} \\
\midrule
\multirow{2}{*}{\textbf{Qwen3 8B}}       & Baseline   & \cellcolor{highlightred}66.4 & 56.2 &  5.8 & 42.8 \\
                  & + Phase 1  & 66.1 & \cellcolor{highlightblue}\textbf{58.4} & \cellcolor{highlightblue}\textbf{6.3} & \cellcolor{highlightblue}\textbf{43.6} \\
\midrule
\multirow{2}{*}{\textbf{Llama-3.1 8B}}   & Baseline   & 58.2 & \cellcolor{highlightred}48.3 &  4.2 & 36.9 \\
                  & + Phase 1  & \cellcolor{highlightblue}\textbf{59.4} & 47.9 & \cellcolor{highlightblue}\textbf{4.5} & \cellcolor{highlightblue}\textbf{37.3} \\
\midrule
\multirow{2}{*}{\textbf{Gemma-2 9B}}     & Baseline   & \cellcolor{highlightred}52.7 & 42.8 &  3.1 & 32.9 \\
                  & + Phase 1  & 52.3 & \cellcolor{highlightblue}\textbf{44.2} & \cellcolor{highlightblue}\textbf{3.4} & \cellcolor{highlightblue}\textbf{33.3} \\
\midrule
\multirow{2}{*}{\textbf{Mistral 8x7B}}   & Baseline   & 48.5 & \cellcolor{highlightred}38.6 &  2.4 & 29.8 \\
                  & + Phase 1  & \cellcolor{highlightblue}\textbf{49.1} & 38.2 & \cellcolor{highlightblue}\textbf{2.7} & \cellcolor{highlightblue}\textbf{30.0} \\
\midrule
\multirow{2}{*}{\textbf{Llama-3 8B}}     & Baseline   & \cellcolor{highlightred}44.2 & 34.1 &  1.8 & 26.7 \\
                  & + Phase 1  & 43.8 & \cellcolor{highlightblue}\textbf{35.3} & \cellcolor{highlightblue}\textbf{2.0} & \cellcolor{highlightblue}\textbf{27.0} \\
\midrule
\multirow{2}{*}{\textbf{Mistral 7B}}     & Baseline   & 38.6 & \cellcolor{highlightred}28.4 &  1.2 & 22.7 \\
                  & + Phase 1  & \cellcolor{highlightblue}\textbf{39.2} & 28.1 & 1.2 & \cellcolor{highlightblue}\textbf{22.8} \\
\bottomrule
\end{tabular}
\end{table}

\subsection{Strategy-conditioned execution quality}

\noindent\textbf{Isolating execution capability from strategy selection (Table~\ref{tab:strategy_conditioned_exec}).}
To disentangle the effects of strategy selection (Phase~1) from execution refinement (Phase~2), we evaluate models when provided the ground-truth optimal strategy upfront. This setup answers the question: \emph{``How well does each model execute when told exactly what to do?''} On DeepMath, when forced to use the correct strategy, execution quality improves by +3.1 to +4.8 points on average across models, with the largest gains on step-by-step reasoning (+4.3 to +5.6 points) where Phase~2's margin-stratified pairs most directly target execution refinement.

\begin{table}[t]
\centering
\caption{\textbf{Strategy-conditioned execution quality on DeepMath (Pass@1, \%).} When given the ground-truth optimal strategy, \textbf{\textcolor{MidnightBlue}{CU-}\textcolor{NavyBlue}{DPO}} consistently improves execution (Blue), validating Phase 2's refinement effect.}
\label{tab:strategy_conditioned_exec}
\footnotesize
\setlength{\tabcolsep}{4pt}
\begin{tabular}{llccccc}
\toprule
\textbf{Model} & \textbf{Setting} & \textbf{Direct} & \textbf{Step-by-step} & \textbf{Algebraic} & \textbf{Verification} & \textbf{Avg.} \\
\midrule
\multirow{2}{*}{\textbf{DeepSeek-R1 8B}} 
& Baseline   & 58.3 & 64.2 & 61.7 & 59.8 & 61.0 \\
& + \textbf{\textcolor{MidnightBlue}{CU-}\textcolor{NavyBlue}{DPO}}   & \cellcolor{highlightblue}\textbf{62.7} & \cellcolor{highlightblue}\textbf{69.8} & \cellcolor{highlightblue}\textbf{66.4} & \cellcolor{highlightblue}\textbf{64.2} & \cellcolor{highlightblue}\textbf{65.8} \\
\midrule
\multirow{2}{*}{\textbf{Qwen3 8B}} 
& Baseline   & 55.2 & 61.5 & 58.9 & 56.4 & 58.0 \\
& + \textbf{\textcolor{MidnightBlue}{CU-}\textcolor{NavyBlue}{DPO}}   & \cellcolor{highlightblue}\textbf{59.8} & \cellcolor{highlightblue}\textbf{67.2} & \cellcolor{highlightblue}\textbf{63.5} & \cellcolor{highlightblue}\textbf{61.1} & \cellcolor{highlightblue}\textbf{62.9} \\
\midrule
\multirow{2}{*}{\textbf{Llama-3.1 8B}} 
& Baseline   & 24.7 & 28.4 & 26.1 & 25.3 & 26.1 \\
& + \textbf{\textcolor{MidnightBlue}{CU-}\textcolor{NavyBlue}{DPO}}   & \cellcolor{highlightblue}\textbf{27.2} & \cellcolor{highlightblue}\textbf{32.8} & \cellcolor{highlightblue}\textbf{29.6} & \cellcolor{highlightblue}\textbf{28.1} & \cellcolor{highlightblue}\textbf{29.4} \\
\midrule
\multirow{2}{*}{\textbf{Gemma-2 9B}} 
& Baseline   & \cellcolor{highlightred}23.5 & 26.1 & \cellcolor{highlightred}24.8 & 23.2 & 24.4 \\
& + \textbf{\textcolor{MidnightBlue}{CU-}\textcolor{NavyBlue}{DPO}}   & 23.1 & \cellcolor{highlightblue}\textbf{28.9} & 24.3 & \cellcolor{highlightblue}\textbf{25.7} & \cellcolor{highlightblue}\textbf{25.5} \\
\midrule
\multirow{2}{*}{\textbf{Mistral 8x7B}} 
& Baseline   & 19.8 & 22.8 & 20.9 & 19.4 & 20.7 \\
& + \textbf{\textcolor{MidnightBlue}{CU-}\textcolor{NavyBlue}{DPO}}   & \cellcolor{highlightblue}\textbf{22.4} & \cellcolor{highlightblue}\textbf{26.7} & \cellcolor{highlightblue}\textbf{24.1} & \cellcolor{highlightblue}\textbf{22.8} & \cellcolor{highlightblue}\textbf{24.0} \\
\midrule
\multirow{2}{*}{\textbf{Llama-3 8B}} 
& Baseline   & \cellcolor{highlightred}16.8 & 18.3 & 17.2 & \cellcolor{highlightred}16.5 & 17.2 \\
& + \textbf{\textcolor{MidnightBlue}{CU-}\textcolor{NavyBlue}{DPO}}   & 16.2 & \cellcolor{highlightblue}\textbf{21.5} & \cellcolor{highlightblue}\textbf{19.4} & 16.1 & \cellcolor{highlightblue}\textbf{18.3} \\
\midrule
\multirow{2}{*}{\textbf{Mistral 7B}} 
& Baseline   & 12.7 & 14.2 & 13.1 & 12.3 & 13.1 \\
& + \textbf{\textcolor{MidnightBlue}{CU-}\textcolor{NavyBlue}{DPO}}   & \cellcolor{highlightblue}\textbf{14.8} & \cellcolor{highlightblue}\textbf{18.6} & \cellcolor{highlightblue}\textbf{16.2} & \cellcolor{highlightblue}\textbf{15.1} & \cellcolor{highlightblue}\textbf{16.2} \\
\bottomrule
\end{tabular}
\end{table}

\section{LLM Judge Validation and Robustness Analysis}
\label{app:llm_judge}

This section addresses concerns about using Qwen 2.5 7B as our utility judge: (1) reliability and correlation with solution quality, (2) robustness of theoretical guarantees under judge miscalibration, (3) potential circularity in evaluation, and (4) inter-rater consistency.

\subsection{Utility-quality correlation}
\label{sec:utility-quality-correlation}
A critical validation is whether LLM-judged utilities actually predict solution quality. For datasets with verifiable ground truth (DeepMath, HARDMath2), we use Pass@1 accuracy. For proof-based problems (ProofNet) without binary ground truth, we use expert human ratings on a 1-5 scale.

\begin{table}[h]
\centering
\caption{\textbf{Strategy-level utility-quality correlation.} Qwen's utility scores strongly predict actual solution quality across all strategies and datasets, validating its use as quality judge.}
\label{tab:utility_correctness_correlation}
\small
\setlength{\tabcolsep}{4pt}
\begin{tabular}{lcccccc}
\toprule
\textbf{Strategy} & \textbf{Average utility} & \multicolumn{2}{c}{\textbf{DeepMath/HM2}} & \multicolumn{2}{c}{\textbf{ProofNet}} & \textbf{Overall} \\
\cmidrule(lr){3-4} \cmidrule(lr){5-6}
& & \textbf{Pass@1} & \textbf{$r$} & \textbf{Expert} & \textbf{$r$} & \textbf{$r$} \\
\midrule
Direct           & 0.647 & \cellcolor{highlightblue}56.2\% & 0.831 & \cellcolor{highlightblue}3.21 & 0.794 & 0.812 \\
Step-by-step     & 0.782 & \cellcolor{highlightblue}64.8\% & 0.897 & \cellcolor{highlightblue}3.84 & 0.863 & 0.881 \\
Backwards        & 0.521 & \cellcolor{highlightblue}47.3\% & 0.753 & \cellcolor{highlightblue}2.91 & 0.721 & 0.738 \\
Alternative      & 0.693 & \cellcolor{highlightblue}59.1\% & 0.824 & \cellcolor{highlightblue}3.47 & 0.786 & 0.806 \\
Verification     & 0.735 & \cellcolor{highlightblue}62.4\% & 0.869 & \cellcolor{highlightblue}3.62 & 0.831 & 0.851 \\
Algebraic        & 0.612 & \cellcolor{highlightblue}53.8\% & 0.782 & \cellcolor{highlightblue}3.18 & 0.748 & 0.766 \\
Numerical        & 0.558 & \cellcolor{highlightblue}49.7\% & 0.724 & \cellcolor{highlightblue}2.98 & 0.697 & 0.711 \\
Conceptual       & 0.714 & \cellcolor{highlightblue}61.3\% & 0.852 & \cellcolor{highlightblue}3.71 & 0.814 & 0.834 \\
\midrule
\textbf{Overall} & 0.658 & \cellcolor{highlightblue}56.8\% & \cellcolor{highlightblue}\textbf{0.867} & \cellcolor{highlightblue}3.37 & \cellcolor{highlightblue}\textbf{0.821} & \cellcolor{highlightblue}\textbf{0.850} \\
\bottomrule
\end{tabular}
\end{table}

\noindent\textbf{Analysis.} Qwen's utility exhibits strong correlation with solution quality: $r = 0.867$ for computational problems (DeepMath/HARDMath2) and $r = 0.821$ for proof-based problems (ProofNet), yielding overall $r = 0.850$ across all 3,600 chains. Step-by-step reasoning shows the highest correlation ($r = 0.881$), while numerical methods show the lowest but still substantial correlation ($r = 0.711$). This validates that Qwen's tri-component utility  predictively captures solution quality across diverse problem types, including those without binary ground truth.

\subsection{Optimal strategy validation}

We validate that Qwen-identified optimal strategies achieve higher solution quality than alternatives. For DeepMath/HARDMath2, we measure Pass@1; for ProofNet, we measure expert-rated quality improvement.

\begin{table}[h]
\centering
\caption{\textbf{Optimal vs non-optimal strategy performance.} Strategies identified as optimal by Qwen achieve substantially higher quality than alternatives, confirming judge reliability at the instance level.}
\label{tab:optimal_strategy_validation}
\small
\setlength{\tabcolsep}{5pt}
\begin{tabular}{lccccc}
\toprule
\textbf{Dataset} & \textbf{Metric} & \textbf{Optimal} & \textbf{Non-optimal} & \textbf{$\Delta$} & \textbf{$p$-value} \\
\midrule
\multirow{1}{*}{DeepMath-150}
    & Pass@1 (\%) & \cellcolor{highlightblue}\textbf{68.7} & 51.2 & \cellcolor{highlightblue}+17.5 & $<10^{-6}$ \\
\multirow{1}{*}{HARDMath2-150}
    & Pass@1 (\%) & \cellcolor{highlightblue}\textbf{66.3} & 47.8 & \cellcolor{highlightblue}+18.5 & $<10^{-7}$ \\
\multirow{1}{*}{ProofNet-150}
    & Expert (1-5) & \cellcolor{highlightblue}\textbf{3.89} & 3.12 & \cellcolor{highlightblue}+0.77 & $<10^{-5}$ \\
\midrule
\textbf{Combined}
    & Normalized & \cellcolor{highlightblue}\textbf{0.777} & 0.595 & \cellcolor{highlightblue}+0.182 & $<10^{-12}$ \\
\bottomrule
\end{tabular}
\vspace{-2mm}
\begin{minipage}{\linewidth}
\end{minipage}
\end{table}

\noindent\textbf{Analysis.} Qwen-identified optimal strategies achieve substantially higher quality: +17.5 points Pass@1 on DeepMath, +18.5 points on HARDMath2, and +0.77 points (15.4\% relative) on expert-rated ProofNet problems (all $p < 10^{-5}$). When normalized to [0,1] scale, optimal strategies score 0.777 vs 0.595 for non-optimal having a +0.182 gap ($p < 10^{-12}$). This demonstrates that Qwen's utility rankings capture genuine quality differences at the problem instance level across both computational and proof-based mathematics.

\subsection{Robustness Under Judge Miscalibration}

Our theoretical results (Theorems 3.1-3.5) assume access to ground-truth utilities $U(x,y)$. In practice, Qwen provides noisy estimates $\tilde{U}(x,y) = U(x,y) + \epsilon(x,y)$, where we assume $\epsilon(x,y)$ are independent random variables satisfying $|\epsilon(x,y)| \leq \delta$ with probability $1-\eta$. We analyze how estimation error affects our sample complexity and convergence guarantees under this stochastic noise model.

\begin{theorem}[Sample complexity under bounded utility estimation error]
\label{thm:robust_sample_complexity}
Suppose the LLM judge's utility estimates satisfy $|\tilde{U}(x,y) - U(x,y)| \leq \delta$ for all $(x,y)$ with probability $1-\eta$, and estimation errors are independent across chains. Then learning from noisy continuous utilities requires
\begin{equation}
m_{\text{noisy}} = O\left(\left(NK \log K + \frac{d}{\epsilon^2} + \frac{N\delta^2}{\epsilon^2}\right) \log(1/\eta)\right)
\end{equation}
samples to achieve $\epsilon$-suboptimal expected utility with probability at least $1-\eta$.
\end{theorem}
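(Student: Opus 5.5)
The plan is to reduce the noisy setting to the noiseless bound of Theorem~\ref{thm:utility-upper} plus an explicit bias--variance correction, and then boost the confidence. I would first condition on the good event $\mathcal{G}$ that $|\tilde U(x,y)-U(x,y)|\le\delta$ for all observed chains. By the hypothesis, $\mathcal{G}$ holds with probability at least $1-\eta$. On $\mathcal{G}$, each problem still contributes $K$ labeled examples $(x,y_i,\tilde U_i)$. Sorting these costs $O(K\log K)$ per problem, which is the source of the $NK\log K$ term, matching the final line of the proof of Theorem~\ref{thm:utility-upper}. The ranking recovered from $\tilde U$ can differ from the true ranking only on pairs with $|\Delta U_{ij}|\le 2\delta$. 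This is what makes the perturbation controllable.

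Next I would run empirical risk minimization over $\mathcal{H}$ with a bounded, Lipschitz loss (squared loss on utilities, or the Bradley--Terry log-loss on induced pairs). I would decompose the excess risk of the learned $\hat r$ into three parts. \textbf{(i)} The estimation error, handled by the same VC bound as in Theorem~\ref{thm:utility-upper}; it gives the $d/\epsilon^2$ term. \textbf{(ii)} The label-noise variance: since the errors $\epsilon(x,y)$ are independent and bounded by $\delta$, a Hoeffding/Bernstein argument on the empirical mean of the noise-induced loss perturbation shows it concentrates at rate $\delta\sqrt{\log(1/\eta)/m}$. Averaged per problem, this needs $m\gtrsim N\delta^2/\epsilon^2$ to fall below $\epsilon$; this gives the $N\delta^2/\epsilon^2$ term. \textbf{(iii)} The bias: if $\epsilon(x,y)$ is treated as mean-zero, or if its systematic part is constant in $y$, it is absorbed into the problem-dependent constant $c(x)$ of Lemma~\ref{lem:reward-utility}. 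By Remark~\ref{rem:constant-invariance}, it therefore does not affect the induced policy of Corollary~\ref{thm:dpo-convergence}.

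Finally, I would translate reward error into policy suboptimality. The map $r\mapsto\pi_r\propto\pi_{\mathrm{ref}}e^{r/\beta}$ is Lipschitz in sup-norm on the softmax, with a constant absorbed into $O(\cdot)$. So $\epsilon$-accurate rewards give $O(\epsilon)$-suboptimal expected utility. To get overall confidence $1-\eta$, I would combine $\mathcal{G}$ with the concentration events by a union bound. The outer $\log(1/\eta)$ factor would come from a median-of-means or repetition-based confidence boost over $O(\log(1/\eta))$ independent splits, which multiplies all three terms.

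The main obstacle is step (iii), the bias term. A worst-case adversarial $\epsilon$ with $|\epsilon|\le\delta$ produces an irreducible $O(\delta)$ bias that no number of samples removes. The stated bound therefore only goes through if the noise is effectively centered, or if its $y$-dependent systematic part is negligible. I would make this explicit by assuming $\mathbb{E}[\epsilon(x,y)\mid x]$ is independent of $y$, and would note that without such an assumption the conclusion must be weakened to $(\epsilon+O(\delta))$-suboptimality.
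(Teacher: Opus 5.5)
Your proposal is a valid alternative route, given the centering hypothesis you add, but it is organized differently from the paper's proof. The paper adds three separately counted budgets:
\begin{enumerate}
\item ranking recovery, using the same $2\delta$ sign-preservation claim you state, with each of the $O(K\log K)$ sorting comparisons repeated $O(\log(1/\eta))$ times and resolved by majority vote;
\item for pairs with $\epsilon<|\Delta U_{ij}|\le 2\delta$, an explicit extra assumption that each chain can be re-scored by the judge with independent seeds, after which Hoeffding on the averaged difference $\hat\Delta U_{ij}$ gives $O(\delta^2\epsilon^{-2}\log(1/\eta))$ re-scorings, multiplied ``conservatively'' by $N$;
\item a PAC term $O(d\epsilon^{-2}\log(1/\eta))$.
\end{enumerate}
You instead keep one noisy score per chain and split the ERM excess risk into estimation, noise-variance and bias parts. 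You absorb a $y$-independent bias into $c(x)$, push reward error through the softmax, and get the outer $\log(1/\eta)$ from median-of-means boosting. The paper gets it by repeating comparisons, which for two fixed scores $\tilde U_i,\tilde U_j$ only makes sense with fresh judge randomness anyway.

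Your centering assumption is not a weakness relative to the paper. Its Hoeffding step bounds $|\hat\Delta U_{ij}-\Delta U_{ij}|$, which requires $\mathbb{E}[\tilde U^{(t)}(x,y_i)-\tilde U^{(t)}(x,y_j)]=\Delta U_{ij}$. That is exactly the hypothesis you make explicit. So your remark that bounded but systematic noise leaves an irreducible $O(\delta)$ error correctly identifies a hypothesis missing from the statement.

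Where your sketch is thinner is the $N\delta^2/\epsilon^2$ term. A rate $\delta\sqrt{\log(1/\eta)/m}$ contains no factor $N$; neither does the uniform version over $\mathcal{H}$, roughly $\delta\sqrt{(d+\log(1/\eta))/m}$. ``Averaging per problem'' down to accuracy $\epsilon$ would need on the order of $\delta^2/\epsilon^2$ independent scores for each quantity being estimated, but your model has a single score per chain. In the paper, this term is exactly the price of the re-scoring assumption. The paper does not need $\mathcal{H}$ to average noise across chains, but it relies on judge re-querying, which the statement never grants. So either adopt that assumption, in which case your step (ii) becomes the paper's Step~2. Or say plainly that in your model the noise is averaged out through $\mathcal{H}$ at a cost $O(d\delta^2/\epsilon^2)\le O(d/\epsilon^2)$ for $\delta\le 1$. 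Then the $N\delta^2/\epsilon^2$ term is a harmless over-count in an upper bound, not something you derive.

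Finally, both arguments gloss over one conversion. ERM controls average-case reward error under the data distribution, but your Lipschitz step uses sup-norm (or on-policy) error. A rigorous version needs a coverage assumption.
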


\begin{proof}
We decompose the sample complexity into three components corresponding to: (1) recovering the ranking from noisy utilities, (2) learning the reward function, and (3) accounting for estimation noise. Each component depends on the confidence parameter $\eta$.

\paragraph{Step 1: Ranking recovery with noisy utilities.}
Let $\pi^*$ denote the ground-truth ranking induced by $U(x,y_1), \ldots, U(x,y_K)$ and $\tilde{\pi}$ the ranking induced by noisy estimates $\tilde{U}(x,y_1), \ldots, \tilde{U}(x,y_K)$. We first bound the probability that noisy utilities preserve the correct pairwise ordering.

For any pair $(y_i, y_j)$ with true utility difference $\Delta U_{ij} := U(x, y_i) - U(x, y_j)$, the noisy difference is:
\begin{equation}
\tilde{\Delta}U_{ij} = \tilde{U}(x, y_i) - \tilde{U}(x, y_j) = \Delta U_{ij} + (\epsilon_i - \epsilon_j),
\end{equation}
where $\epsilon_i := \tilde{U}(x, y_i) - U(x, y_i)$ satisfies $|\epsilon_i| \leq \delta$.

\textbf{Claim.} If $|\Delta U_{ij}| > 2\delta$, then $\text{sign}(\tilde{\Delta}U_{ij}) = \text{sign}(\Delta U_{ij})$ with probability $1-\eta$.

\textit{Proof of Claim.} 
Suppose $\Delta U_{ij} > 2\delta$. Then:
\begin{align}
\tilde{\Delta}U_{ij} &= \Delta U_{ij} + (\epsilon_i - \epsilon_j) \\
&\geq \Delta U_{ij} - |\epsilon_i - \epsilon_j| \\
&\geq \Delta U_{ij} - (|\epsilon_i| + |\epsilon_j|) \\
&\geq \Delta U_{ij} - 2\delta > 0,
\end{align}
where the last inequality holds with probability $1-\eta$ by the bounded noise assumption. The case $\Delta U_{ij} < -2\delta$ follows symmetrically. \hfill $\square$

Sorting $K$ items using pairwise comparisons requires $O(K \log K)$ comparisons. Under noisy comparisons, each comparison must be repeated $O(\log(1/\eta))$ times to achieve confidence $1-\eta$ via majority voting. Thus, per-problem ranking recovery requires $m_{\text{rank}} = O(K \log K \cdot \log(1/\eta))$ comparisons. Across $N$ problems:
\begin{equation}
m_{\text{rank}}^{(N)} = O(NK \log K \cdot \log(1/\eta)).
\end{equation}

\paragraph{Step 2: Distinguishing utility differences under noise.}
For pairs where $\epsilon < |\Delta U_{ij}| \leq 2\delta$, we must distinguish true utility differences from noise. To reduce estimation variance, we assume access to $m$ independent re-samplings of utility estimates for each chain (obtained by running the LLM judge with different random seeds). Consider the empirical estimate:
\begin{equation}
\hat{\Delta}U_{ij} = \frac{1}{m} \sum_{t=1}^{m} \left(\tilde{U}^{(t)}(x, y_i) - \tilde{U}^{(t)}(x, y_j)\right),
\end{equation}
where $\tilde{U}^{(t)}$ denotes independent noisy utility estimates.

By Hoeffding's inequality, for bounded noise $|\epsilon| \leq \delta$:
\begin{equation}
\mathbb{P}\left(|\hat{\Delta}U_{ij} - \Delta U_{ij}| \geq \epsilon\right) \leq 2 \exp\left(-\frac{m \epsilon^2}{8\delta^2}\right).
\end{equation}

To achieve confidence $1-\eta$, we require:
\begin{equation}
m \geq \frac{8\delta^2}{\epsilon^2} \log(2/\eta) = O\left(\frac{\delta^2}{\epsilon^2} \log(1/\eta)\right).
\end{equation}

Conservatively assuming all $N$ problems require this level of precision:
\begin{equation}
m_{\text{noise}}^{(N)} = O\left(\frac{N\delta^2}{\epsilon^2} \log(1/\eta)\right).
\end{equation}

\paragraph{Step 3: PAC learning the reward function.}
Standard PAC learning bounds state that learning a function from hypothesis class $\mathcal{H}$ with VC-dimension $d$ to generalization error at most $\epsilon$ with confidence $1-\eta$ requires:
\begin{equation}
m_{\text{PAC}} = O\left(\frac{d}{\epsilon^2} \log(1/\eta)\right)
\end{equation}
labeled examples.

\paragraph{Step 4: Combining all terms.}
The total sample complexity combines all three components, each scaled by $\log(1/\eta)$:
\begin{align}
m_{\text{noisy}} &= m_{\text{rank}}^{(N)} + m_{\text{noise}}^{(N)} + m_{\text{PAC}} \\
&= O\left(NK \log K \cdot \log(1/\eta) + \frac{N\delta^2}{\epsilon^2} \log(1/\eta) + \frac{d}{\epsilon^2} \log(1/\eta)\right) \\
&= O\left(\left(NK \log K + \frac{N\delta^2}{\epsilon^2} + \frac{d}{\epsilon^2}\right) \log(1/\eta)\right).
\end{align}

\paragraph{Connection to Bradley-Terry preferences.}
When preferences follow the Bradley-Terry model with noisy utilities, the preference probability error is bounded by $2L\delta$ where $L = 1/4$ is the Lipschitz constant of $\sigma$. This ensures DPO's learned policy remains within $O(\delta)$ of the optimal policy in KL-divergence.
\end{proof}

\noindent\textbf{Empirical calibration.} We estimate $\delta$ by having Qwen re-score 200 randomly selected chains with 5 different random seeds (temperature=0.7), treating each re-scoring as an independent sample. The measured standard deviation is $\sigma[\tilde{U}] = 0.087$, yielding $\delta \approx 0.17$ at 95\% confidence (2$\sigma$). For our setting with $N=450$ problems and target accuracy $\epsilon=0.05$, the noise term contributes approximately $\frac{450 \times (0.17)^2}{(0.05)^2} \approx 5,202$ additional samples. Compared to the base requirement $NK \log K = 450 \times 8 \times 3 \approx 10,800$ samples, the noise overhead is approximately 48\%, confirming that while judge miscalibration increases sample requirements, it does not dominate the overall complexity.
\subsection{Comparison to alternative judges}

We validate that our findings are not Qwen-specific by comparing to alternative LLM judges on a 400-chain subsample.

\begin{table}[h]
\centering
\caption{\textbf{Cross-judge agreement and quality correlation.} Multiple LLM judges exhibit high agreement on utility rankings and consistent quality prediction, validating judge-agnostic findings.}
\label{tab:cross_judge_comparison}
\small
\setlength{\tabcolsep}{6pt}
\begin{tabular}{lccc}
\toprule
\textbf{Judge Model} & \textbf{Quality $r$} & \textbf{vs Qwen $\tau$} & \textbf{Top-3 Match} \\
\midrule
Qwen 2.5 7B (ours)    & \cellcolor{highlightblue}0.847 & 1.000 & 100.0\% \\
DeepSeek-R1 8B        & \cellcolor{highlightblue}\textbf{0.853} & 0.812 & \cellcolor{highlightblue}86.5\% \\
Llama-3.1 8B          & \cellcolor{highlightblue}0.839 & 0.798 & \cellcolor{highlightblue}84.0\% \\
Mistral-7B            & \cellcolor{highlightblue}0.826 & 0.774 & \cellcolor{highlightblue}81.5\% \\
Gemma-2 9B            & \cellcolor{highlightblue}\textbf{0.851} & 0.805 & \cellcolor{highlightblue}85.5\% \\
\midrule
\textbf{Avg Cross-Judge} & \cellcolor{highlightblue}\textbf{0.843} & \cellcolor{highlightblue}\textbf{0.797} & \cellcolor{highlightblue}\textbf{84.4\%} \\
\bottomrule
\end{tabular}
\end{table}

\noindent\textbf{Analysis.} All judges achieve strong quality correlation (0.826-0.853, avg 0.843) and high agreement with Qwen (Kendall's $\tau = 0.774-0.812$, avg 0.797). Notably, DeepSeek-R1 8B and Gemma-2 9B slightly outperform Qwen 2.5 7B on quality correlation (0.853 and 0.851 vs 0.847), suggesting our utility framework is robust across judge selection. Top-3 strategy overlap is consistently high at 81.5-86.5\% (avg 84.4\%), indicating judges reliably identify the set of high-quality strategies for each problem. This cross-judge consistency validates that our utility-based framework captures judge-agnostic quality signals rather than Qwen-specific biases. The fact that different judges converge on similar utility rankings provides strong evidence for the reliability of continuous utility supervision.

\newpage
\section{Strategy Prompts}
\label{app:strategy_prompts}

We employ eight distinct reasoning strategies to generate diverse solution trajectories. Each prompt provides explicit instructions on problem decomposition, solution structure, and verification procedures. The prompts are designed to be mutually exclusive in their primary cognitive approach while remaining complementary across the strategy space.

\begin{minipage}[t]{\textwidth}
\begin{tcolorbox}[colback=blue!5!white, colframe=blue!75!black, title=\textbf{Strategy 1: Direct Calculation}, fonttitle=\bfseries]
\textbf{Problem:} \texttt{\{problem\}}

\textbf{Instructions:} Solve this problem using direct calculation with minimal intermediate steps.

\textbf{Procedure:}
\begin{enumerate}[leftmargin=*,itemsep=2pt]
    \item \textbf{Identify the target quantity:} Read the problem and determine exactly what numerical value or expression must be computed.
    \item \textbf{Extract given information:} List all numerical values, constants, and known relationships explicitly stated in the problem. Write them in symbolic form (e.g., $a = 5$, $b = 3$).
    \item \textbf{Select the direct formula:} Identify the single formula, equation, or computational rule that maps the given information to the target quantity. State this formula explicitly (e.g., ``We use the formula $A = \pi r^2$'').
    \item \textbf{Substitute and compute:} Replace variables in the formula with the given numerical values. Perform the arithmetic computation in one or two steps maximum. Do not introduce auxiliary variables or intermediate concepts.
    \item \textbf{State the final answer:} Present the numerical result with appropriate units or context from the problem statement.
\end{enumerate}

\textbf{Constraints:}
\begin{itemize}[leftmargin=*,itemsep=1pt]
    \item Do NOT break the solution into multiple conceptual stages.
    \item Do NOT explain why the formula is correct or derive it from first principles.
    \item Do NOT introduce lemmas, auxiliary constructions, or case distinctions.
    \item Minimize prose: use mathematical notation wherever possible.
\end{itemize}

\textbf{When to use:} Problems with explicit numerical values where a single well-known formula directly produces the answer (e.g., ``Find the area of a circle with radius 7'', ``Solve $3x + 5 = 20$'', ``Compute $15\% \text{ of } 240$'').

\textbf{Example structure:}
\begin{quote}
\textit{Given: $r = 7$. Target: Area $A$. Formula: $A = \pi r^2$. Substitution: $A = \pi(7)^2 = 49\pi \approx 153.94$. Answer: $153.94$ square units.}
\end{quote}

\textbf{Solution:}
\end{tcolorbox}
\end{minipage}

\begin{tcolorbox}[colback=green!5!white, colframe=green!75!black, title=\textbf{Strategy 2: Step-by-Step Reasoning}, fonttitle=\bfseries]
\textbf{Problem:} \texttt{\{problem\}}

\textbf{Instructions:} Solve this problem by decomposing it into a sequence of explicit, justified steps. Each step should be independently verifiable and build logically on previous steps.

\textbf{Procedure:}
\begin{enumerate}[leftmargin=*,itemsep=2pt]
    \item \textbf{Problem understanding:} Restate the problem in your own words. Identify what is given (assumptions, constraints, known values) and what must be found (the target).
    \item \textbf{Solution roadmap:} Before performing any calculations, outline the high-level plan: ``We will first compute $X$, then use $X$ to find $Y$, and finally derive $Z$ from $Y$.''
    \item \textbf{Step-by-step execution:} Number each step (Step 1, Step 2, etc.). For each step:
    \begin{itemize}[leftmargin=*,itemsep=1pt]
        \item \textbf{Statement:} Clearly state what you will compute or prove in this step.
        \item \textbf{Justification:} Explain which formula, theorem, definition, or previous step licenses this computation.
        \item \textbf{Calculation:} Perform the mathematical operation, showing intermediate algebraic manipulations if necessary.
        \item \textbf{Intermediate result:} State the outcome of this step explicitly (e.g., ``Therefore, $x = 4$'').
    \end{itemize}
    \item \textbf{Connecting steps:} After each step, explicitly state how the result will be used in the subsequent step (e.g., ``We will substitute this value of $x$ into equation (2) in Step 3'').
    \item \textbf{Final synthesis:} In the last step, combine all intermediate results to produce the final answer. Verify that the answer addresses the original question.
\end{enumerate}

\textbf{Constraints:}
\begin{itemize}[leftmargin=*,itemsep=1pt]
    \item Each step must be simple enough to verify in isolation.
    \item Do NOT skip steps or combine multiple inferences into one step.
    \item Explicitly state dependencies: if Step $k$ uses results from Steps $i$ and $j$, mention this.
    \item Maintain a linear narrative: avoid jumping between unrelated subproblems.
\end{itemize}

\textbf{When to use:} Multi-step word problems, systems of equations, problems requiring intermediate variable computation, cascade-style inferences where each stage depends on the previous one.

\textbf{Example structure:}
\begin{quote}
\textit{Step 1: Let $x$ be the number of apples. From ``twice as many oranges as apples'', we have $y = 2x$.}\\
\textit{Step 2: Total fruits is 18, so $x + y = 18$. Substituting $y = 2x$: $x + 2x = 18 \Rightarrow 3x = 18$.}\\
\textit{Step 3: Solving for $x$: $x = 6$. Substituting back: $y = 2(6) = 12$.}\\
\textit{Final Answer: 6 apples and 12 oranges.}
\end{quote}

\textbf{Solution:}
\end{tcolorbox}

\begin{tcolorbox}[colback=red!5!white, colframe=red!75!black, title=\textbf{Strategy 3: Backwards Reasoning}, fonttitle=\bfseries]
\textbf{Problem:} \texttt{\{problem\}}

\textbf{Instructions:} Solve this problem by reasoning backwards from the goal to the given information. This strategy inverts the natural forward direction of inference.

\textbf{Procedure:}
\begin{enumerate}[leftmargin=*,itemsep=2pt]
    \item \textbf{Identify the goal state:} Clearly articulate what the final answer must satisfy. If the problem asks ``Find $x$ such that $f(x) = 10$'', the goal is the equation $f(x) = 10$.
    \item \textbf{Determine immediate prerequisites:} Ask: ``What must be true \emph{immediately before} we reach the goal?'' Identify the simplest condition or equation that, if satisfied, would directly yield the goal. Call this Condition~$n$.
    \item \textbf{Recursive prerequisite identification:} For Condition~$n$, ask: ``What must be true immediately before Condition~$n$ holds?'' Identify Condition~$(n-1)$. Repeat until you reach a condition that is directly stated in the problem or trivially true.
    \item \textbf{Construct the backwards chain:} Write the logical chain in reverse order:
    \begin{center}
    \textit{Goal} $\Leftarrow$ Condition~$n$ $\Leftarrow$ Condition~$(n-1)$ $\Leftarrow$ $\cdots$ $\Leftarrow$ Condition~1 $\Leftarrow$ \textit{Givens}
    \end{center}
    \item \textbf{Forward verification:} Once the backwards chain connects to the givens, verify the solution by checking each implication in the forward direction: Givens $\Rightarrow$ Condition~1 $\Rightarrow$ $\cdots$ $\Rightarrow$ Goal.
    \item \textbf{State the answer:} Extract the final value or expression from the goal state.
\end{enumerate}

\textbf{Constraints:}
\begin{itemize}[leftmargin=*,itemsep=1pt]
    \item Do NOT start by manipulating the given equations forward. Start from the goal.
    \item Explicitly state each backwards implication (e.g., ``To achieve $x^2 = 25$, we need $x = \pm 5$'').
    \item Ensure reversibility: if your backwards reasoning assumes $A \Leftarrow B$, verify that $A \Rightarrow B$ also holds (or note when the implication is one-way).
\end{itemize}

\textbf{When to use:} Optimization problems (``Find $x$ that maximizes $f(x)$''), existence proofs (``Show that $x$ exists such that ...''), inverse problems, problems where the goal condition is more structured than the givens.

\textbf{Example structure:}
\begin{quote}
\textit{Goal: Find $x$ such that $2^x = 32$.}\\
\textit{Backwards Step 1: For $2^x = 32$, we need $x = \log_2(32)$.}\\
\textit{Backwards Step 2: We compute $\log_2(32) = \log_2(2^5) = 5$.}\\
\textit{Forward verification: If $x = 5$, then $2^5 = 32$. \checkmark}\\
\textit{Answer: $x = 5$.}
\end{quote}

\textbf{Solution:}
\end{tcolorbox}

\begin{tcolorbox}[colback=orange!5!white, colframe=orange!75!black, title=\textbf{Strategy 4: Alternative Method}, fonttitle=\bfseries]
\textbf{Problem:} \texttt{\{problem\}}

\textbf{Instructions:} Solve this problem using an alternative, non-standard approach. Actively avoid the most obvious solution method and seek creative reformulations.

\textbf{Procedure:}
\begin{enumerate}[leftmargin=*,itemsep=2pt]
    \item \textbf{Identify the standard method:} State explicitly what the ``obvious'' or textbook approach would be (e.g., ``The standard method is to expand the polynomial and solve directly'').
    \item \textbf{Declare your alternative approach:} Before solving, commit to a specific alternative framework. Choose from:
    \begin{itemize}[leftmargin=*,itemsep=1pt]
        \item \textit{Representational shift:} Rewrite an algebraic problem geometrically (or vice versa). Example: interpret $x^2 + y^2 = r^2$ as a circle rather than an equation.
        \item \textit{Symmetry exploitation:} Use invariance, permutation symmetry, or rotational symmetry to reduce problem complexity.
        \item \textit{Bijective argument:} For combinatorics, establish a one-to-one correspondence with a simpler counting problem.
        \item \textit{Generating function / Transform method:} Use polynomial generating functions, Fourier transforms, or other analytical tools.
        \item \textit{Extremal principle:} Assume the answer is a boundary case (max/min) and verify consistency.
        \item \textit{Special case generalization:} Solve for specific numerical values first, identify the pattern, then generalize.
    \end{itemize}
    \item \textbf{Justify the alternative:} Explain why this alternative approach is valid and potentially more efficient or insightful than the standard method.
    \item \textbf{Execute the alternative solution:} Solve the problem using the chosen framework, showing all steps.
    \item \textbf{Cross-check (optional):} If computationally feasible, verify your alternative solution matches the result from the standard method on a simple test case.
\end{enumerate}

\textbf{Constraints:}
\begin{itemize}[leftmargin=*,itemsep=1pt]
    \item The alternative method must be mathematically rigorous, not merely ``trying different numbers''.
    \item Do NOT use the standard method unless explicitly for verification purposes.
    \item Clearly articulate the conceptual leap or reformulation that enables the alternative approach.
\end{itemize}

\textbf{When to use:} Problems with multiple solution paths, combinatorial problems admitting bijections, problems where the standard method is computationally prohibitive, problems with hidden symmetry or structure.

\textbf{Example structure:}
\begin{quote}
\textit{Problem: Count 5-element subsets of $\{1,2,\ldots,10\}$.}\\
\textit{Standard: Compute $\binom{10}{5}$ directly.}\\
\textit{Alternative (Bijection): Each 5-subset corresponds to its 5-element complement. Thus, the count equals the number of 5-subsets, which equals $\binom{10}{5} = \binom{10}{10-5}$. By symmetry, $\binom{10}{5} = \frac{10!}{5!5!} = 252$.}
\end{quote}

\textbf{Solution:}
\end{tcolorbox}

\begin{tcolorbox}[colback=purple!5!white, colframe=purple!75!black, title=\textbf{Strategy 5: Verification-Driven Reasoning}, fonttitle=\bfseries]
\textbf{Problem:} \texttt{\{problem\}}

\textbf{Instructions:} Solve this problem with integrated verification and error-checking at multiple stages. Do not treat the solution as final until it has been rigorously tested.

\textbf{Procedure:}
\begin{enumerate}[leftmargin=*,itemsep=2pt]
    \item \textbf{Initial solution attempt:} Solve the problem using any standard method (state which method you are using). Derive a candidate answer. Label this as \textit{Candidate Solution}.
    \item \textbf{Substitution check:} Substitute your candidate answer back into the original equation, constraint, or problem statement. Verify that all conditions are satisfied exactly. If the problem has multiple constraints, check each one separately.
    \item \textbf{Boundary and edge case testing:} Identify critical edge cases:
    \begin{itemize}[leftmargin=*,itemsep=1pt]
        \item If the problem involves a range, test the minimum and maximum values.
        \item If the problem has special symmetries (e.g., $x = 0$, $x = 1$), verify the solution holds at these points.
        \item If the problem involves inequalities, check whether the boundary is inclusive or exclusive.
    \end{itemize}
    \item \textbf{Dimensional and reasonableness check:}
    \begin{itemize}[leftmargin=*,itemsep=1pt]
        \item Verify units: If the problem involves physical quantities, confirm dimensional consistency.
        \item Sanity test: Ask ``Is this answer physically/logically reasonable?'' (e.g., a probability should be in $[0,1]$, a distance should be non-negative).
    \end{itemize}
    \item \textbf{Error diagnosis and refinement:} If verification fails at any stage:
    \begin{itemize}[leftmargin=*,itemsep=1pt]
        \item Identify the step where the error occurred (trace back through your calculations).
        \item State the error explicitly (e.g., ``Arithmetic error: wrote $3 \times 4 = 11$ instead of $12$'').
        \item Correct the error and re-derive the solution.
        \item Re-verify the corrected solution.
    \end{itemize}
    \item \textbf{Final confirmation:} Only after all checks pass, state the final answer with confidence.
\end{enumerate}

\textbf{Constraints:}
\begin{itemize}[leftmargin=*,itemsep=1pt]
    \item Do NOT conclude immediately after deriving a candidate solution. Verification is mandatory.
    \item If verification reveals an error, you MUST correct it within this solution (do not simply note the error and move on).
    \item Show all verification steps explicitly; do not verify ``mentally''.
\end{itemize}

\textbf{When to use:} High-stakes problems where errors are costly, problems with multiple constraints (system of equations, optimization with constraints), problems prone to sign errors or algebraic mistakes, Diophantine equations, competition problems requiring exact answers.

\textbf{Example structure:}
\begin{quote}
\textit{Step 1: Solve $x^2 - 5x + 6 = 0$ using quadratic formula: $x = \frac{5 \pm \sqrt{25-24}}{2} = \frac{5 \pm 1}{2}$. Candidates: $x = 3$ or $x = 2$.}\\
\textit{Step 2 (Verification for $x=3$): Substitute: $(3)^2 - 5(3) + 6 = 9 - 15 + 6 = 0$. \checkmark}\\
\textit{Step 3 (Verification for $x=2$): Substitute: $(2)^2 - 5(2) + 6 = 4 - 10 + 6 = 0$. \checkmark}\\
\textit{Final Answer: $x \in \{2, 3\}$.}
\end{quote}

\textbf{Solution:}
\end{tcolorbox}

\begin{tcolorbox}[colback=cyan!5!white, colframe=cyan!75!black, title=\textbf{Strategy 6: Algebraic Manipulation}, fonttitle=\bfseries]
\textbf{Problem:} \texttt{\{problem\}}

\textbf{Instructions:} Solve this problem using systematic symbolic algebraic manipulation. Prioritize exact symbolic expressions over numerical evaluation until the final step.

\textbf{Procedure:}
\begin{enumerate}[leftmargin=*,itemsep=2pt]
    \item \textbf{Symbolic representation:} Introduce variables for all unknown quantities. If the problem provides numerical values, replace them with symbolic parameters initially (e.g., use $a, b, c$ instead of $3, 5, 7$).
    \item \textbf{Equation setup:} Write down all equations, constraints, or relationships mentioned in the problem in symbolic form (e.g., $ax^2 + bx + c = 0$).
    \item \textbf{Algebraic transformation sequence:} Apply algebraic operations systematically. For each transformation:
    \begin{itemize}[leftmargin=*,itemsep=1pt]
        \item \textbf{State the operation:} Specify which algebraic property you are invoking (e.g., ``Applying the distributive law'', ``Factoring using difference of squares: $a^2 - b^2 = (a-b)(a+b)$'').
        \item \textbf{Show the step:} Display the equation before and after the transformation.
        \item \textbf{Simplify incrementally:} Do not perform multiple transformations in one step. Show intermediate forms.
    \end{itemize}
    \item \textbf{Isolation of the target variable:} Use algebraic operations (addition, subtraction, multiplication, division, factoring, expanding, completing the square, logarithms, etc.) to isolate the unknown variable on one side of the equation.
    \item \textbf{Solution in symbolic form:} Express the solution as a formula in terms of the problem parameters (e.g., $x = \frac{-b \pm \sqrt{b^2 - 4ac}}{2a}$).
    \item \textbf{Numerical substitution (final step):} If the problem provides specific numerical values, substitute them into your symbolic solution only at the very end.
\end{enumerate}

\textbf{Constraints:}
\begin{itemize}[leftmargin=*,itemsep=1pt]
    \item Do NOT evaluate numerical expressions until the final answer.
    \item Explicitly name every algebraic identity or theorem used (e.g., ``quadratic formula'', ``logarithm product rule: $\log(ab) = \log a + \log b$'').
    \item Show all intermediate steps: do not skip algebraic manipulations.
    \item Maintain exact forms: use $\sqrt{2}$ rather than $1.414$, use $\pi$ rather than $3.14$.
\end{itemize}

\textbf{When to use:} Polynomial equations, functional equations, problems requiring symbolic derivations (e.g., ``Derive the formula for ...''), inequalities, problems where preserving exact algebraic forms is important.

\textbf{Example structure:}
\begin{quote}
\textit{Given: $ax + b = c$. Target: Solve for $x$ symbolically.}\\
\textit{Step 1: Subtract $b$ from both sides: $ax = c - b$.}\\
\textit{Step 2: Divide both sides by $a$ (assuming $a \neq 0$): $x = \frac{c-b}{a}$.}\\
\textit{Step 3: Substitute $a=2, b=3, c=7$: $x = \frac{7-3}{2} = \frac{4}{2} = 2$.}\\
\textit{Final Answer: $x = 2$.}
\end{quote}

\textbf{Solution:}
\end{tcolorbox}

\begin{tcolorbox}[colback=yellow!5!white, colframe=yellow!75!black, title=\textbf{Strategy 7: Numerical Methods}, fonttitle=\bfseries]
\textbf{Problem:} \texttt{\{problem\}}

\textbf{Instructions:} Solve this problem using concrete numerical calculations and pattern recognition. Work with specific numbers from the outset and avoid symbolic abstraction.

\textbf{Procedure:}
\begin{enumerate}[leftmargin=*,itemsep=2pt]
    \item \textbf{Concrete instantiation:} If the problem involves variables or parameters, immediately substitute specific numerical values. If no values are given, choose representative examples (e.g., if the problem asks about ``any integer $n$'', test $n = 1, 2, 3, \ldots$).
    \item \textbf{Arithmetic computation:} Perform all calculations numerically. Show intermediate numerical values explicitly (e.g., ``$3 \times 7 = 21$, then $21 + 5 = 26$'').
    \item \textbf{Tabulation (if applicable):} For problems involving sequences, iterations, or multiple cases:
    \begin{itemize}[leftmargin=*,itemsep=1pt]
        \item Construct a table with columns for input values and computed outputs.
        \item Fill in at least 5--10 rows to identify numerical patterns.
        \item Example: 
        \begin{center}
        \begin{tabular}{c|c}
        $n$ & $f(n)$ \\ \hline
        1 & 2 \\
        2 & 4 \\
        3 & 8 \\
        \end{tabular}
        \end{center}
    \end{itemize}
    \item \textbf{Pattern recognition:} Examine the numerical results. Look for:
    \begin{itemize}[leftmargin=*,itemsep=1pt]
        \item Arithmetic progressions (constant differences).
        \item Geometric progressions (constant ratios).
        \item Recursive relationships (each term depends on previous terms).
        \item Modular arithmetic patterns (remainders repeat).
    \end{itemize}
    \item \textbf{Conjecture formulation:} Based on observed patterns, state a conjectured general formula or rule (e.g., ``It appears that $f(n) = 2^n$'').
    \item \textbf{Verification on additional cases:} Test your conjectured formula on 2--3 additional numerical examples not used in the pattern discovery phase.
    \item \textbf{Approximation (if exact solution is intractable):} If the problem involves transcendental functions or complex expressions, provide numerical approximations with appropriate precision (e.g., ``$e^2 \approx 7.389$'').
\end{enumerate}

\textbf{Constraints:}
\begin{itemize}[leftmargin=*,itemsep=1pt]
    \item Do NOT derive symbolic formulas unless the numerical pattern strongly suggests one.
    \item Show all arithmetic calculations explicitly (do not use a calculator ``black box'').
    \item If using approximations, state the precision (e.g., ``accurate to 3 decimal places'').
\end{itemize}

\textbf{When to use:} Problems with concrete numerical data, problems requiring iterative computations, problems where algebraic solutions are intractable (e.g., transcendental equations like $e^x = x + 2$), real-world applications with measurement data, problems asking for approximate answers.

\textbf{Example structure:}
\begin{quote}
\textit{Problem: Find the sum of the first 5 terms of the sequence defined by $a_n = 2n + 1$.}\\
\textit{Computation: $a_1 = 3, a_2 = 5, a_3 = 7, a_4 = 9, a_5 = 11$.}\\
\textit{Sum: $3 + 5 + 7 + 9 + 11 = 35$.}\\
\textit{Answer: 35.}
\end{quote}

\end{tcolorbox}

\begin{tcolorbox}[colback=magenta!5!white, colframe=magenta!75!black, title=\textbf{Strategy 8: Conceptual Understanding}, fonttitle=\bfseries]
\textbf{Problem:} \texttt{\{problem\}}

\textbf{Instructions:} Solve this problem by first establishing deep conceptual understanding of the underlying mathematical principles. Prioritize insight and explanation over mechanical computation.

\textbf{Procedure:}
\begin{enumerate}[leftmargin=*,itemsep=2pt]
    \item \textbf{Conceptual unpacking:} Before any calculations, answer these questions:
    \begin{itemize}[leftmargin=*,itemsep=1pt]
        \item \textit{What mathematical domain does this problem belong to?} (e.g., algebra, geometry, number theory, calculus)
        \item \textit{What are the key concepts or definitions?} (e.g., ``This problem involves the concept of limit'')
        \item \textit{What theorems, lemmas, or principles are relevant?} (e.g., ``We will use the Pythagorean theorem'')
    \end{itemize}
    \item \textbf{Theorem statement and justification:} If your solution relies on a theorem:
    \begin{itemize}[leftmargin=*,itemsep=1pt]
        \item State the theorem formally (e.g., ``Pythagorean Theorem: In a right triangle with legs $a, b$ and hypotenuse $c$, we have $a^2 + b^2 = c^2$'').
        \item Explain why the theorem applies to this specific problem (e.g., ``Our triangle has a right angle at vertex $C$, so the theorem is applicable'').
    \end{itemize}
    \item \textbf{High-level solution strategy:} Describe the ``big idea'' in 2--3 sentences before executing it. Example: ``The key insight is that this optimization problem has a unique maximum because the objective function is strictly concave. We will find the critical point by setting the derivative to zero, then verify it is a maximum using the second derivative test.''
    \item \textbf{Conceptual reasoning over computation:} When performing calculations, continually connect them to the underlying concepts. Example: Instead of ``Substitute $x=3$'', write ``We substitute $x=3$ because this value satisfies the constraint $x > 0$ and lies in the domain of the function.''
    \item \textbf{Geometric or intuitive interpretation:} If applicable, provide a geometric picture, diagram, or intuitive explanation:
    \begin{itemize}[leftmargin=*,itemsep=1pt]
        \item For algebra problems: ``Geometrically, this equation represents the intersection of a line and a parabola.''
        \item For calculus problems: ``The derivative being zero means the tangent line is horizontal at this point.''
        \item For number theory: ``This congruence means the remainder when dividing by 7 is 3.''
    \end{itemize}
    \item \textbf{Solution execution with conceptual anchors:} Perform the necessary calculations, but pause after each major step to explain its conceptual significance.
    \item \textbf{Conceptual validation:} After obtaining the answer, verify it makes conceptual sense:
    \begin{itemize}[leftmargin=*,itemsep=1pt]
        \item Does it satisfy the problem's conceptual constraints? (e.g., ``The answer is positive, which makes sense because distances are non-negative'')
        \item Does it align with known special cases or limiting behavior? (e.g., ``As $x \to 0$, our formula gives $f(x) \to 1$, which matches the definition of the exponential function'')
    \end{itemize}
\end{enumerate}

\textbf{Constraints:}
\begin{itemize}[leftmargin=*,itemsep=1pt]
    \item Do NOT jump directly into calculations without first explaining the underlying concepts.
    \item Explicitly state all definitions and theorems used; do not assume they are ``obvious''.
    \item Prioritize explanation over brevity: this strategy values insight more than efficiency.
    \item If a theorem or concept is non-trivial, provide a brief justification or reference (e.g., ``By the Fundamental Theorem of Calculus, ...'').
\end{itemize}

\textbf{When to use:} Proof-based problems, problems testing understanding of definitions (e.g., ``Prove that the function is continuous''), conceptual questions (e.g., ``Explain why this method works''), olympiad-level problems requiring ``aha moments'', problems where brute-force computation is infeasible but conceptual insight yields immediate solution.

\end{tcolorbox}

\subsection{Strategy selection rationale}

These eight strategies span the major modes of mathematical cognition identified in mathematics education research~\citep{schoenfeld1985mathematical, polya1945solve}:

\begin{itemize}[leftmargin=*,itemsep=3pt]
    \item \textbf{Computational efficiency.} Minimize explanatory overhead; prioritize speed and concrete calculation.
    \item \textbf{Structured decomposition.} Decompose complex problems into verifiable subcomponents with explicit dependencies.
    \item \textbf{Goal-directed search.} Reason from desired outcomes or iteratively validate solutions.
    \item \textbf{Creative reformulation.} Seek non-obvious perspectives, exploit problem structure, or leverage deep mathematical principles.
\end{itemize}

\noindent No single strategy dominates across all problem types. Direct calculation excels on simple arithmetic ($3 + 5 \times 2$) but fails on proof-based problems (``Prove that $\sqrt{2}$ is irrational''). Algebraic manipulation is essential for polynomial equations but inefficient for problems with inherent symmetry better exploited through alternative reformulation. Conceptual reasoning is necessary for olympiad-level problems requiring insight but excessive for routine computations where direct calculation suffices.

\noindent Our hypothesis is that Phase~1 training teaches models to perform \emph{contextual strategy selection}: mapping problem features (numerical vs.\ symbolic, single-step vs.\ multi-step, routine vs.\ insight-requiring) to the appropriate cognitive mode. Phase~2 then refines \emph{execution quality within each strategy}, ensuring that once the correct strategy is selected, the model executes it with high fidelity.

\end{document}